%\documentclass[11pt,a4paper]{article}
%%%%%%%%%%%% conference %%%%%%%%%%%%
% a4paper, letterpaper
% \documentclass[letterpaper, 10pt, conference]{ieeeconf}
%%%%%%%%%%%% journal %%%%%%%%%%%%
% draftcls, technote, onecolumn, doublecolumn,12pt
\documentclass[doublecolumn]{IEEEtran}

\usepackage{graphicx}
\usepackage{subfig}
\usepackage{amsmath}
  % to avoid confliction of amsthm with the packages such as ieeeconf
\usepackage{amsthm} % definte the proof environment: block letters + italic letters
\usepackage{amssymb}
\usepackage[font=footnotesize]{caption} % set the captain font size to 8 (i.e. footnotesize)
\usepackage{subfig} % uses subfloats within a single float MUST after the package {caption}!!
\usepackage[noadjust]{cite} % alphabetize grouped citations automatically [3, 1, 2] to [1, 2, 3]
\usepackage{color}
\usepackage{algorithm} % options: boxed [section]
\usepackage{algpseudocode} % for algorithm
\usepackage{enumerate}
\usepackage[hidelinks,colorlinks=false]{hyperref}
\usepackage[left=0.75in, right=0.75in, top=0.75in, bottom=0.75in]{geometry}
\usepackage{authblk} % for authors' affiliation \affil{xxx}
\usepackage{arydshln} % for dashline in table or matrix
\usepackage{booktabs}
\usepackage{multirow}

\usepackage{bm}
\usepackage{tikz}
\usetikzlibrary{calc} % for calculation functions in Tikz let, in commands in Tikz
\usetikzlibrary{shapes} % for block diagram
\usetikzlibrary{chains}
\usetikzlibrary{fit}
\usetikzlibrary{arrows}
\usetikzlibrary{decorations.text} % text along path
\usetikzlibrary{decorations.markings}
\usetikzlibrary{decorations.pathmorphing} % for zigzag arrow
\usetikzlibrary{shadows}
\usetikzlibrary{patterns}
\usetikzlibrary{matrix}
\usepackage{pgfplots}
\usepackage[europeanresistors]{circuitikz}
\usepackage[outline]{contour} % white surround text
\contourlength{1.5pt}

\newtheorem{theorem}{Theorem}

%\newcommand{\sigep_}[1]{\sig(\ep_{#1})}

%\newcommand{\Null}[1]{\mathrm{Null}\left(#1\right)}
 % skew symmetric operator
 % block diagnal matrix
%\renewcommand{\span}[1]{\mathrm{span}\left\{#1\right\}} % ERROR when redefine \span

\graphicspath{{figures/}}

\newcommand{\blue}{\textcolor{black}}

\begin{document}

\title{PreGME: Prescribed Performance Control of Aerial Manipulators based on Variable‑Gain ESO}
\author{Mengyu Ji, Shiliang Guo, Zhengzhen Li, Jiahao Shen, Huazi Cao, Shiyu Zhao
 \thanks{
 Mengyu Ji and Zhengzhen Li are with the College of Computer Science and Technology, Zhejiang University, Hangzhou 310058, China, and with the WINDY Lab, Department of Artificial Intelligence, Westlake University, Hangzhou 310030, China (E-mail:\{jimengyu, lizhengzhen\}@westlake.edu.cn).}
\thanks{Shiliang Guo, Jiahao Shen are with the WINDY Lab, Department of Artificial Intelligence, Westlake University, Hangzhou 310030, China (E-mail: \{guoshiliang, shenjiahao\}@westlake.edu.cn).}
\thanks{Huazi Cao is with the College of Artificial Intelligence, Zhejiang University, Hangzhou 310058, China (E-mail:caohuazi@wioe.westlake.edu.cn).}
\thanks{Shiyu Zhao is with the WINDY Lab, Department of Artificial Intelligence, and with the Research Center for Industries of the Future, Westlake University, Hangzhou 310030, China (E-mail: zhaoshiyu@westlake.edu.cn).}
\thanks{(Corresponding author: Huazi Cao)}
}%\author
%\IEEEoverridecommandlockouts

\maketitle
\begin{abstract}
An aerial manipulator, comprising a multirotor base and a robotic arm, is subject to significant dynamic coupling between these two components. Therefore, achieving precise and robust motion control is a challenging yet important objective.
Here, we propose a novel prescribed performance motion control framework based on variable‑gain extended state observers (ESOs), named PreGME. The method includes variable‑gain ESOs for real-time estimation of dynamic coupling and a prescribed performance flight control that incorporates error trajectory constraints. Compared with existing methods, the proposed approach exhibits the following two characteristics. First, the adopted variable‑gain ESOs can accurately estimate rapidly varying dynamic coupling. This enables the proposed method to handle manipulation tasks that require aggressive motion of the robotic arm. Second, by prescribing the performance, a preset error trajectory is generated to guide the system evolution along this trajectory. This strategy allows the proposed method to ensure the tracking error remains within the prescribed performance envelope, thereby achieving high-precision control. Experiments on a real platform, including aerial staff twirling, aerial mixology, and aerial cart-pulling experiments, are conducted to validate the effectiveness of the proposed method. Experimental results demonstrate that even under the dynamic coupling caused by rapid robotic arm motion  (end-effector velocity: 1.02 m/s, acceleration: 5.10 m/s$^2$), the proposed method achieves high tracking performance.

\end{abstract}
\begin{IEEEkeywords}
aerial manipulator, dynamic coupling, extended state observer, preset error trajectory, 
\end{IEEEkeywords}

%\overrideIEEEmargins

\section{Introduction}
An aerial manipulator consists of a multirotor base and a robotic arm, thereby combining aerial mobility with dexterous manipulation capabilities \cite{cao2025proximal}. In contrast to ground-based robotic arms, aerial manipulators extend the workspace into a more expansive three-dimensional space. This breakthrough overcomes operational altitude limitations, enabling the execution of complex manipulation tasks at high altitudes. Compared to \blue{standard} multirotors, aerial manipulators can significantly improve aerial operation accuracy by leveraging their robotic arm to compensate for the multirotor base's tracking errors. Due to these advantages, aerial manipulators exhibit high potential in fields such as contact-based inspection \cite{bodie2020active,nava2019direct}, aerial grasping \cite{2025_aerial_grasping_yadav},  and aerial assembly \cite{2022_shanghaijioatong_assembly}.

\begin{figure}[!t]
\centering
\includegraphics[width=\linewidth]{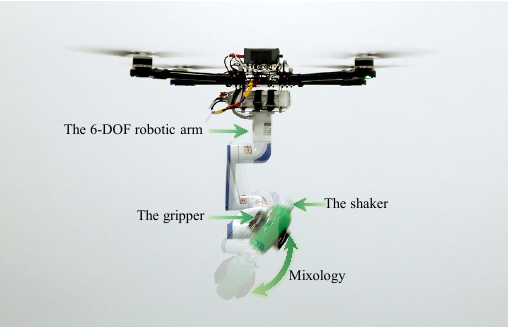}
\caption{Aerial mixology by an aerial manipulator. 
The experimental video is available at https://youtu.be/LGKp5rYE8GU.}
\label{fig_main_mixology}
\end{figure}

Current research on aerial manipulators has encompassed several critical directions, such as motion control \cite{bodie2021dynamic}, motion planning \cite{cao2024motion}, and manipulation control \cite{2024_extracting_object}. These studies enhance the capability of aerial manipulators to perform complex manipulation tasks through advancements in control and motion planning. The motion control is important since it is not only a prerequisite for ensuring stable operation but also the critical link that connects high-level planning to physical execution. In addition, the performance of the motion control, encompassing accuracy, responsiveness, and robustness, directly determines the upper limit of the entire system's task execution capability.

Over the past decade, a wide range of motion control approaches have been developed, which can generally be classified into coupled control and decoupled control methods. In the coupled control method, the multirotor base and the robotic arm are treated as a unified entity \cite{2014_couple_control}. 
Such an approach requires a completely accurate dynamic model that includes the multirotor base, the robotic arm, and the dynamic coupling between them. However, accurate models are often difficult to obtain in practice. Additionally, the lightweight robotic arm's servo motors may not provide sufficient feedback signals for constructing an accurate model. The decoupled control method, in contrast, treats the multirotor and the robotic arm as separate subsystems and regards their interaction as disturbances. It enables the control design of the two subsystems to be conducted separately. This allows the decoupled control method to operate without requiring a precise and complete dynamic model of the aerial manipulator. 
This advantage allows the decoupled control method to impose lower hardware requirements and offer a broader range of applications \cite{bodie2021dynamic}. Existing research on the decoupled control method has primarily focused on two aspects: dynamic coupling estimation and flight control design.

The dynamic coupling in aerial manipulators arises from the mutual influence of motion between the multirotor base and the robotic arm. In early studies on decoupled control, the dynamic coupling between the multirotor base and the robotic arm was often neglected, with independent controllers designed separately for the multirotor base and the robotic arm \cite{zhang2018grasp}. This approach is applicable when the robotic arm is relatively light compared to the multirotor base. Nevertheless, as the robotic arm's mass increases, the dynamic coupling becomes more significant, which may cause performance deterioration or unstable behavior. To compensate for the influence of dynamic coupling, mathematical models of the coupling have been introduced into the control design to improve the control accuracy of aerial manipulators \cite{bodie2021dynamic}. However, acquiring a precise mathematical model in practical applications is challenging due to the inherent complexity of real-world systems. To address this issue, disturbance observers that do not rely on precise models have been incorporated into the control of aerial manipulators, such as the extended state observer \cite{cao2023eso} and the sliding mode disturbance observer \cite{chen2022adaptive}. Nevertheless, these disturbance observers remain prone to noise amplification and inadequate response to nonlinear disturbances due to their linear high-gain structures \cite{2017_TIE_NESO}. Therefore, in the presence of high-frequency disturbances or rapid robotic arm motion, the use of such disturbance observers may lead to notable estimation errors.

Flight control design is also a critical issue in the research of aerial manipulators. Initially, the traditional PID control method is directly employed to design the flight control for the multirotor base \cite{mellinger2011design_grasping}. However, this method did not effectively utilize known model information to counteract dynamic coupling. To further improve the control accuracy of the multirotor base, nonlinear control methods such as sliding mode control \cite{2015_slide_mode_control}, adaptive control \cite{liu2021ddpg}, and backstepping control \cite{jimenez2015aerial} have been applied to the flight control. Although these methods can guarantee stability, \blue{they cannot ensure the tracking error within the prescribed performance envelope when subjected to dynamic coupling disturbances \cite{CAO_practical_ppt}.}  To achieve this objective, prescribed performance control (PPC) has been adopted \cite{kong2023experimental_ppc}. \blue{According to \cite{preset-trajectory}, the existing PPC methods can be roughly divided into three types: funnel control, nonlinear mapping-based PPC, and barrier Lyapunov function-based PPC. However, all these methods suffer from potential singularity issues due to their reliance on barrier functions \cite{preset-trajectory}.} Specifically, subject to intense external disturbances, the tracking error may approach the boundary of the barrier functions. \blue{In such cases, the values of barrier functions grow to infinity. This consequently leads to an unbounded control input reference, causing actuator saturation and potential system instability  \cite{preset-trajectory}}. To mitigate this, smooth non-singular mappings \cite{lei2023_steer_mapping_ppc} and dynamic boundary adjustments \cite{yong2020flexible} have been proposed. Both strategies, however, typically involve the calculation of high-order derivatives of the transformed error, which may complicate controller design and increase tuning difficulty.

To overcome the aforementioned challenges in dynamic coupling estimation and flight control design, \blue{this paper proposes a motion control strategy based on variable-gain extended state observers (ESOs) and error trajectory constraints, which we refer to as PreGME. This strategy is inspired by a framework called GME, established in \cite{cao2023eso}, which synergizes geometric control with explicit modeling for measurable dynamic coupling disturbances and estimation for unmeasurable disturbances. By integrating error trajectory constraints into this framework, PreGME can achieve high-precision prescribed performance control under strong dynamic coupling conditions. Three experiments on practical platforms are conducted to demonstrate the effectiveness of the proposed algorithm.} The main contributions of this paper include:

1) We propose a partially decoupled motion control framework based on variable-gain ESOs \cite{2021CESO} for high-precision motion control of aerial manipulators. The method utilizes ESOs to estimate dynamic couplings, enabling accurate estimation of rapidly varying dynamic couplings using only limited measurements. Compared to motion control approaches employing the extended state observer \cite{cao2023eso} or the sliding mode observer \cite{chen2022adaptive}, the proposed framework is more suitable for application scenarios involving high-speed robotic arm motion.

2) We propose a novel prescribed performance flight control method based on error trajectory constraints. The constraints are defined by a user-specified preset error trajectory based on the prescribed performance envelope. A nonlinear control law is then designed to force the actual tracking error to follow this error trajectory, thereby ensuring that the tracking error of the aerial manipulator satisfies prescribed performance requirements. Unlike existing PPC methods (e.g., \cite{chen2022adaptive,liang2022adaptive}), \blue{the proposed approach employs a smooth and non-singular preset error trajectory rather than a barrier function to constrain the tracking error, thereby avoiding the occurrence of singularity issues.} Moreover, the preset error trajectory can be flexibly designed, enabling the controller to adapt to task scenarios with varying performance specifications.

\section{\blue{Preliminaries and Control System Overview}}

This section introduces \blue{some necessary preliminaries used in this paper and provides an overview of the control system}.

\subsection{Dynamics of the Aerial Manipulator }

In this work, the aerial manipulator consists of a quadcopter base and a six-degree-of-freedom serial robotic arm (see Fig.~\ref{fig_control_frame}). The robotic arm is mounted underneath the quadcopter base. The robotic arm's end-effector can achieve both translational and rotational motion relative to the quadcopter base under the control of six joints. As shown in Fig.~\ref{fig_control_frame}, two coordinate frames are employed: the north-east-down inertial frame $\varSigma_I$ and the body-fixed frame of the quadcopter $\varSigma_B$.

Let $\bm p \in \mathbb{R}^{3}$ and $\bm v \in \mathbb{R}^{3}$ denote the position and velocity of the quadcopter base in $\varSigma_I$, respectively. The angular velocity of the quadcopter base in $\varSigma_B$ is denoted as $\bm \omega \in \mathbb{R}^{3}$, and $\bm R \in SO(3)$ represents the rotation matrix from $\varSigma_B$ to $\varSigma_I$. Let $ m_B \in \mathbb{R} $ and $ m_R \in \mathbb{R}$ be the mass of the quadcopter and the robotic arm, respectively. The inertia matrix of the quadcopter base in $\varSigma_B$ is denoted by $\bm{I} \in \mathbb{R}^{3 \times 3}$. The total thrust of the rotors is denoted by $T \in \mathbb{R}$. The torque vector of the rotors in $\varSigma_B $ is represented by $\bm{\tau} \in \mathbb{R}^{3}$. The dynamic coupling force and torque exerted by the robotic arm on the quadcopter base are denoted by $\bm{\Delta}_v \in \mathbb{R}^{3}$ and $\bm{\Delta}_\omega \in \mathbb{R}^3$, respectively. According to \cite{zhang2019robust,cao2023eso}, the dynamics of the quadcopter base is
\begin{equation}
\label{eq_dynamics}
\begin{aligned}
\text{Position dynamics}\left\{\begin{aligned}
&\dot{\bm{p}}=\bm{v},\\
&\dot{\bm v}=-\frac{T \bm R \bm n}{m_B+m_R}+g\bm n+\bm{\Delta}_v,
\end{aligned}\right.\\
\text{Attitude dynamics}\left\{\begin{aligned}
&\dot{\bm R}=\bm R[\bm \omega]_{\times},\\
&\dot{\bm \omega}=\bm I^{-1}(\bm \tau-\bm \omega\times \bm I\bm \omega)+\bm \Delta_\omega,
\end{aligned}\right.\\
\end{aligned}
\end{equation}
where $\bm n=[0,0,1]^T$ is a unit vector, $g \in \mathbb{R}$ is the gravitational acceleration, and $[\cdot]_{\times}$ denotes the skew-symmetric transformation. \blue{The control force vector is defined as $\bm T = T \bm R \bm n \in \mathbb{R}^3$.} The expressions of $\bm{\Delta}_v$ and $\bm{\Delta}_\omega$ can be found in \cite{zhang2019robust,cao2023eso}. Since their expressions involve quantities that cannot be accurately measured by sensors, such as the angular acceleration of the quadcopter base and the joint angular accelerations of the robotic arm, these dynamic coupling terms are treated as unknown disturbances in this paper. When the robotic arm is absent, i.e., $m_R=0$, $\bm{\Delta}_v =0$, and $\bm{\Delta}_\omega =0$, model~\eqref{eq_dynamics} degenerates into a standard quadrotor dynamics.

\subsection{Variable-Gain Extended State Observer}\label{Sec_Variable-Gain_ESO}

This subsection introduces the \blue{variable-gain} ESO~\cite{2021CESO} for estimating dynamic couplings. 
\blue{We adopt the variable-gain ESO because it provides faster dynamic response and stronger rejection of high-frequency noise. This advantage originates from its design, which differs from traditional fixed-gain ESOs by using a nonlinear variable-gain scheme\cite{2021CESO}.} The stability analysis and other details of the variable-gain ESO can be found in~\cite{2021CESO}. 

Consider a system with the input $u \in \mathbb{R}$ and the output $x \in \mathbb{R}$ in the form $\dot x = \Delta + u$, where $\Delta \in \mathbb{R}$ is the unknown term of the system. The output $x$ can be measured by sensors. \blue{ Denote $y_1 = x$ and  $y_2 = \Delta \in \mathbb{R}$} as the state and the extended state of the ESO, respectively. Then the system can be described as 
\blue{\begin{equation}
\label{eq_CESO_system}
\dot y_1 = y_2 + u, \ \dot y_2 = \dot \Delta.
\end{equation}}
Define $h \in \mathbb{R}$ as an internal auxiliary state of the variable-gain ESO. According to \cite{2021CESO}, the variable-gain ESO is designed as  
\begin{equation}
\label{eq_CESO}
\begin{aligned}
&\dot h= \frac{\alpha g(e)}{\varepsilon} + u, \quad e = y_1 - h, \quad \hat y_2 = \frac{\alpha g(e)}{\varepsilon},
\end{aligned}
\end{equation}
where $\hat{y}_2 \in \mathbb{R}$ is the estimates of $y_2$, $e \in \mathbb{R}$ is the error between $y_1$ and $h$, $\alpha>0$ is a constant, and $\varepsilon \in (0,1)$ is a tuning parameter, $g(e) \in \mathbb{R}$ is a variable gain and is designed as  
\begin{equation} 
\label{eq_function_g}
g (e)=  \frac{\exp{(e)}+\exp{(-e)}}{w(\exp{(e)}+\exp{(-e)})+d} \, e,
\end{equation}
where $w>0$ and $d>0$ are constants. \blue{The rationale for this gain function is twofold. First, it inherently balances noise suppression and state reconstruction by assigning low gain to small errors for noise mitigation and high gain to large errors for rapid convergence. Second, its parameters offer flexible tuning: increasing $d$ lowers the gain floor to enhance noise immunity, while decreasing $w$ raises the gain ceiling to accelerate dynamic response. }

\blue{Based on the separation principle \cite{atassi2001separation} and Theorem~1 in \cite{2021CESO}, we conclude that,} assuming the unknown term $\Delta$ and its derivative $\dot \Delta$ are bounded, if the nonlinear gain function is selected as \eqref{eq_function_g},  then for any given $\delta_f > 0$ and $t_f > 0$, there exists a constant $\varepsilon^* \in (0,1) $ such that $\forall \varepsilon \in (0, \varepsilon^*)$, $|\Delta - \hat \Delta| \leq \delta_f, \forall t \ge t_f$.

\subsection{Tracking Error Trajectory Constraints}\label{sec_preset_error_trajectory}

\begin{figure*}[!t]
\centering
\includegraphics[width=\linewidth]{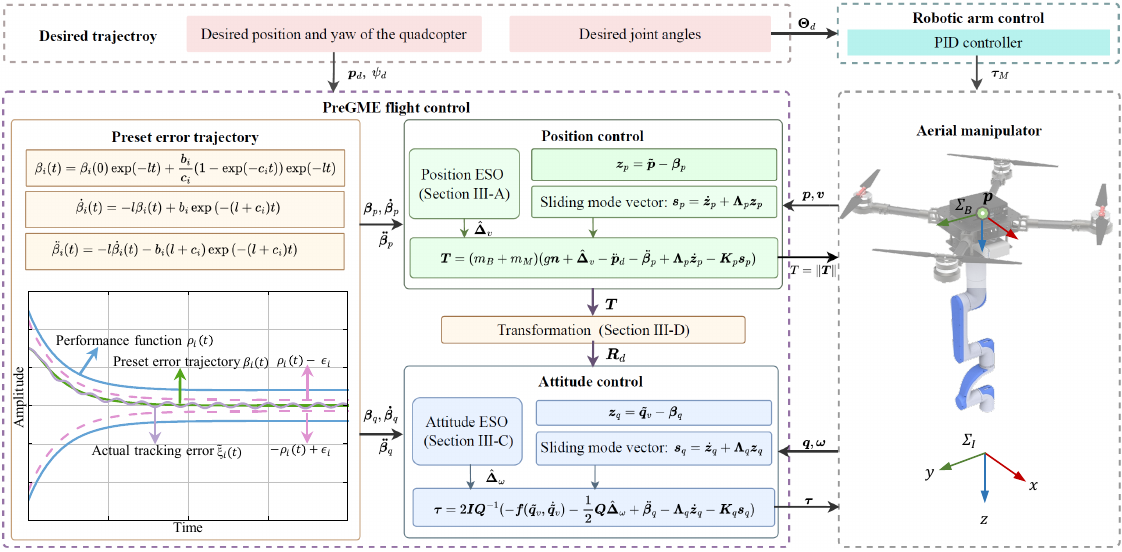}
\caption{The proposed control framework of the aerial manipulator system.}
\label{fig_control_frame}
\end{figure*}

Inspired by \cite{preset-trajectory, preset-trajectory2025}, the tracking error trajectory constraints are characterized by a preset error trajectory. \blue{The preset error trajectory is chosen for its ability to guide the controller's tracking behavior. This ensures that the actual tracking error can remain within the prescribed performance envelope (see Fig.~\ref{fig_control_frame}). }
The construction of the preset error trajectory involves two steps.

The first step is to design a performance envelope to describe the prescribed tracking performance. 
Let $\bm \rho(t) \in \mathbb{R}^3$  denote the upper bound function of the performance envelope. It is designed as  $\bm \rho(t) = (\bm \rho_{0} - \bm \rho_{\infty}) \exp(-l t) + \bm \rho_{\infty}$,
where $\bm \rho_{0} = [\rho_{0,1}, \rho_{0,2}, \rho_{0,3}]^T \in \mathbb{R}^3$, $\bm \rho_{\infty} = [\rho_{\infty,1}, \rho_{\infty,2}, \rho_{\infty,3}]^T \in \mathbb{R}^3$, and $l >0$ are parameters to be determined. Then, the lower bound function of the performance envelope is $-\bm \rho(t)$. Let $\bm \xi \in \mathbb{R}^3$ and \blue{$\tilde{\bm \xi} \in \mathbb{R}^3$ denote the state and the state error}, respectively. The parameter $\bm \rho_0$ can be designed based on the initial tracking error $\tilde{\bm \xi}(0)$ to ensure that the initial error is within the performance envelope, i.e., $\rho_{0,i} > |\tilde{\xi}_i(0)|, i=1,2,3$. The exponential decay term $\exp(-l t)$ controls the convergence speed, where a larger $l$ yields faster convergence. The value of $l$ is determined by the task completion time. The parameter $\bm \rho_{\infty}$ represents the maximum allowable steady-state error. Its lower bound is typically determined by sensor noise or modeling uncertainties, while the upper bound is determined by the task-specific accuracy requirements. 

The second step is to obtain the preset error trajectory based on the performance envelope. Define $\bm \beta(t) = [\beta_1(t), \beta_2(t), \beta_3(t)]^T \in \mathbb{R}^3$ as the preset error trajectory. Define $\bm c =[c_1,c_2,c_3]^T\in \mathbb{R}^3$ and $\bm b = l \tilde{\bm \xi}(0) + \dot{\tilde{\bm \xi}}(0)\in \mathbb{R}^3$ as the parameter vectors of $\bm \beta(t)$. According to \cite{preset-trajectory}, the definition of $\bm \beta(t)$  is   
\begin{equation}
\label{eq_beta}
\beta_{i}(t) = \beta_{i}(0)\exp(-l t) + \frac{b_i}{c_i} (1 - \exp(-c_i t)) \exp(-l t), 
\end{equation}
where $\beta_{i}(0) = \tilde{\xi}_i(0)$, and $i=1,2,3$. \blue{ The preset error trajectory in \eqref{eq_beta} is smooth and singularity-free. Moreover, appropriate selection of $ c_i,i=1,2,3$ ensures that $ \beta_{i}(t)$ remains within the performance envelope.} 

Let ${\epsilon}_i \in \mathbb{R}$ be a positive constant  satisfying  that  $\epsilon_i < \min\{\rho_{\infty,i}, \rho_{0,i} - |\tilde{\xi}_i(0)|\}$ for $i = 1,2,3$. \blue{According to Lemma 2 in \cite{preset-trajectory}}, for the preset tracking error $\beta_{i}(t)$ in \eqref{eq_beta}, there always exists a design parameter $c_i > |b_i|/(\rho_{0,i} - |\tilde{\xi}_i(0)| - \epsilon_i)$ such that $|\beta_{i}(t)| < \rho_i(t) - \epsilon_i, \ \forall t \ge 0$.

\subsection{Control System Overview}

The overall architecture of the motion control system is shown in Fig.~\ref{fig_control_frame}. 
The system is decomposed into two components.

The first is the flight control of the quadcopter base. 
The inputs of this part are the desired position $\bm p_d$ and the desired yaw angle $\psi_d$, and the outputs are the thrust $T$ and the torque $\bm \tau$. 
The flight control adopts a cascaded architecture comprising position control (see Section~\ref{sec_pos_control}) and attitude control (see Section~\ref{sec_att_control}). In both position and attitude control, a variable-gain ESO is employed to estimate the dynamic coupling generated by the robotic arm's motion in real time. The estimated values of the dynamic coupling are then fed back in real time to their respective position and attitude control laws.

The second is the joint control of the robotic arm. The input of this part is the desired joint angle $\bm \Theta_d\in\mathbb{R}^6$, and the output is the joint torque $\bm \tau_M\in\mathbb{R}^6$. \blue{The proposed framework is compatible with a variety of control methods. In this work, we adopt a PID-based approach for joint control. Other methods capable of regulating joint torque to track the desired joint angle are also applicable.}

\section{Prescribed Performance Control based on Variable-Gain ESO}\label{sec_tracking_control}

\blue{This section proposes a variable-gain ESO-based prescribed performance for flight control of the aerial manipulator. The proposed flight control consists of position and attitude control. The variable-gain ESOs are employed to estimate the dynamic coupling force and torque. Furthermore, the preset error trajectories are used to constrain the tracking performance, ensuring that the tracking error remains within the performance envelope.}

\subsection{Position ESO} \label{sec_pos_ESO}
\blue{The objective of the position ESO is to estimate the dynamic coupling force $\bm \Delta_v$ based on the variable-gain ESO method described in Section~\ref{Sec_Variable-Gain_ESO}}.  \blue{From \eqref{eq_dynamics}, the position dynamics can be rewritten in a compact form as 
\begin{equation}
\label{eq_pos_dynamics}
\begin{aligned}
\dot {\bm p} = \bm v,\quad \dot {\bm v}=\bm u_v +\bm{\Delta_{v}},
\end{aligned}
\end{equation}
where $\bm u_v = g\bm n - \bm T/(m_B +m_R) \in \mathbb{R}^3$. $\bm \Delta_v$ is treated as an unknown term and estimated by the position ESO. Let  $\hat{\bm \Delta}_v$ be the estimate of $\bm \Delta_v$.
Since the velocity $\bm v$ is available from onboard sensors, it is utilized to estimate $\bm \Delta_v$. Let $\bm v$ and $\bm \Delta_v$ be the state and the extended state of the variable-gain ESO, respectively.  
Consider the system $\dot {\bm v}=\bm u_v +\bm{\Delta_{v}}$}. For the $i$-th element $\hat {\Delta}_{v,i}$ of $\hat{\bm \Delta}_v$, the corresponding ESO is designed as
\begin{equation}
\begin{split}
    \dot h_{v,i}={\alpha_{v,i}g(e_{v,i})}/{\varepsilon_{v,i}}&+u_{v,i}, \quad e_{v,i}=v_i-h_{v,i},\\
\hat \Delta_{v,i}=&{\alpha_{v,i}g(e_{v,i})}/{\varepsilon_{v,i}},
\end{split}
\end{equation}
where  $\bm h_{v} =[h_{v,1},h_{v,2},h_{v,3}]^T \in \mathbb{R}^3$ is an internal auxiliary state,  the gain function $g(e_{v,i})$ is given in \eqref{eq_function_g}, \blue{$\bm e_v = [e_{v,1}, e_{v,2},e_{v,3}]^T \in \mathbb{R}^3$ is the error vector between the velocity $\bm v$ and the auxiliary state $\bm h_v$,} $\bm \alpha_v= [\alpha_{v,1},\alpha_{v,2},\alpha_{v,3}]^T\in \mathbb{R}^3$ is a positive constant vector, and $\bm \varepsilon_v = [\varepsilon_{v,1},\varepsilon_{v,2},\varepsilon_{v,3}]^T\in \mathbb{R}^3$ is a tuning parameter vector. The initial condition of the variable-gain ESO is set as $\bm h_v(0)=\bm v(0)$. 

\blue{Let $\tilde{\bm \Delta}_v = \bm \Delta_v - \hat{\bm \Delta}_v$ denote the estimation error of the position ESO. According to Section~\ref{Sec_Variable-Gain_ESO}, the estimation error satisfies $\| \tilde{\bm \Delta}_v \|  = \| \bm \Delta_{v}-\hat{\bm \Delta}_{v}\|\le \delta_{v,f}$ for all $t\ge t_{v,f}$, where $\delta_{v,f} \in \mathbb{R}$ is a positive constant.}

\subsection{Position Control}\label{sec_pos_control}

The objective of the position control is to compute the control force vector to track the desired position and constrain the tracking error within the performance envelope. The design of the position control law is divided into two steps.

The first is to design a sliding mode vector to guide the error trajectory toward the preset error trajectory.
Denote the quadcopter base's desired position and the position error as $\bm p_d \in \mathbb{R}^3$ and $\tilde{\bm p} = \bm p - \bm p_d \in \mathbb{R}^3$, respectively. The upper bound function for the position performance envelope is denoted by $\bm \rho_p(t) \in \mathbb{R}^3$, and the preset error trajectory is denoted by $\bm \beta_p(t) \in \mathbb{R}^3$, which is designed according to Section~\ref{sec_preset_error_trajectory}. Let $\bm \rho_{p,0}\in \mathbb{R}^3$, $\bm \rho_{p,\infty}\in \mathbb{R}^3$, and $l_p\in \mathbb{R}$ denote the parameters of $\bm \rho_{p}(t)$. Let  $\bm c_p \in \mathbb{R}^3$ and $\bm b_p \in \mathbb{R}^3$ denote the parameters of $\bm \beta_p(t)$.  
Define $\bm z_p(t) =\tilde{\bm p}(t) - \bm \beta_{p}(t) \in \mathbb{R}^3$. Then the sliding mode vector is defined as 
${\bm s}_p =\dot {\bm{z}}_p + \bm{\Lambda}_p {\bm z_p}$, 
where $\bm{\Lambda}_p \in \mathbb{R}^{3\times3}$ is a positive diagonal matrix. 

The second is to design the tracking control law. Taking the time derivative of $\bm s_p$ and substituting \eqref{eq_pos_dynamics} into the resulting expression yields
\begin{equation}
\begin{aligned}
\label{eq_dot_sp}
\dot{\bm s}_p &=\ddot {\bm{z}}_p + \bm{\Lambda}_p\dot {\bm z}_p\\
&= \ddot{\bm p} - \ddot{\bm p}_d  - \ddot{\bm \beta}_{p} + \bm{\Lambda}_p\dot {\bm z}_p\\
& =g\bm n - \frac{\bm T}{m_B + m_R} + \bm \Delta_v- \ddot{\bm p}_d  - \ddot{\bm \beta}_{p} + \bm{\Lambda}_p\dot {\bm z}_p.
\end{aligned}
\end{equation}
Then, the control force input is designed as
\begin{equation}
\label{eq_postion_control}
\bm T = (m_B+m_R) \big( g\bm n+\hat{\bm \Delta}_v-\ddot {\bm p}_{d} -\ddot{\bm{\beta}}_{p} + \bm \Lambda_p \dot {\bm z}_p +\bm K_p \bm s_p\big),
\end{equation}
where $\bm{K}_p \in \mathbb{R}^{3\times3}$ is a positive diagonal matrix.
Then, the total thrust is computed as $T = \|\bm T\|$. 
The proof that the position tracking error $\tilde{\bm p}$ always remains within the performance envelope is given as follows.
\begin{theorem}
\label{thm_pos_error}
Let $\lambda_{\min}(\bm K_p)$ be the minimum eigenvalue of $\bm K_p$, and $[\Lambda_p]_{i,i}$ the $i$-th diagonal element of $\bm \Lambda_p$. Assuming that $\bm \Delta_v$ and its derivative $\dot{\bm \Delta}_v$ are bounded, for the $i$-th element $\tilde{ p}_i(t)$ of the tracking error $\tilde{\bm p}(t)$, if the parameters are chosen such that ${\delta_{v,f}}/([\Lambda_p]_{i,i}\lambda_{\min}(\bm K_p)) < \min\{\rho_{p,\infty,i}, \rho_{p,0,i} - |\tilde{p}_{i}(0)| \}$ and $c_{p,i} > {|b_{p,i}|}/(\epsilon_{p,0,i}-{\delta_{v,f}}/([\Lambda_p]_{i,i}\lambda_{\min}(\bm K_p)))$, then the position tracking error satisfies $ | \tilde{ p}_i(t) |< {\rho}_{p,i}(t), \ \forall t \ge 0$.
\end{theorem}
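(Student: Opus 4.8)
The plan is to proceed in two stages: first establish a bound on the sliding variable $\bm s_p$, then convert that into a bound on $\bm z_p$, and finally invoke the preset-trajectory machinery of Section~\ref{sec_preset_error_trajectory} to conclude the envelope constraint on $\tilde{\bm p}$. For the first stage, I would substitute the control law \eqref{eq_postion_control} into \eqref{eq_dot_sp} so that the model-based terms cancel, leaving the clean error dynamics $\dot{\bm s}_p = -\bm K_p \bm s_p + \tilde{\bm \Delta}_v$, where $\tilde{\bm \Delta}_v = \bm \Delta_v - \hat{\bm \Delta}_v$ is the position-ESO estimation error. Then I would take $V_s = \tfrac12 \bm s_p^T \bm s_p$ and compute $\dot V_s = -\bm s_p^T \bm K_p \bm s_p + \bm s_p^T \tilde{\bm \Delta}_v \le -\lambda_{\min}(\bm K_p)\|\bm s_p\|^2 + \|\bm s_p\|\,\|\tilde{\bm \Delta}_v\|$. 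Using the ESO bound $\|\tilde{\bm \Delta}_v\| \le \delta_{v,f}$ for $t \ge t_{v,f}$ from Section~\ref{sec_pos_ESO}, a standard ISS-type argument shows $\|\bm s_p\|$ converges to the ball of radius $\delta_{v,f}/\lambda_{\min}(\bm K_p)$; componentwise this gives $|s_{p,i}(t)| \to$ a neighborhood consistent with $|\dot z_{p,i} + [\Lambda_p]_{i,i} z_{p,i}|$ being ultimately bounded by $\delta_{v,f}/\lambda_{\min}(\bm K_p)$.

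For the second stage, I would treat $\dot z_{p,i} = -[\Lambda_p]_{i,i} z_{p,i} + s_{p,i}$ as a stable first-order filter driven by $s_{p,i}$. Since $[\Lambda_p]_{i,i} > 0$ and the input is ultimately bounded, $z_{p,i}$ is ultimately bounded by $\|s_{p,i}\|_\infty / [\Lambda_p]_{i,i} \le \delta_{v,f}/([\Lambda_p]_{i,i}\lambda_{\min}(\bm K_p))$. The role of the first hypothesis, $\delta_{v,f}/([\Lambda_p]_{i,i}\lambda_{\min}(\bm K_p)) < \min\{\rho_{p,\infty,i}, \rho_{p,0,i} - |\tilde p_i(0)|\}$, is exactly to make this ultimate bound on $|z_{p,i}| = |\tilde p_i - \beta_{p,i}|$ small enough to serve as the constant $\epsilon_{p,i}$ in the preset-trajectory lemma. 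The second hypothesis on $c_{p,i}$ is then precisely the condition from the last paragraph of Section~\ref{sec_preset_error_trajectory} (Lemma~2 in \cite{preset-trajectory}) guaranteeing $|\beta_{p,i}(t)| < \rho_{p,i}(t) - \epsilon_{p,i}$ for all $t \ge 0$. Combining by the triangle inequality, $|\tilde p_i(t)| \le |\beta_{p,i}(t)| + |z_{p,i}(t)| < (\rho_{p,i}(t) - \epsilon_{p,i}) + \epsilon_{p,i} = \rho_{p,i}(t)$.

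The subtle point — and the main obstacle — is making the argument valid for \emph{all} $t \ge 0$, not merely for $t \ge t_{v,f}$ after the ESO has converged. During the transient $[0, t_{v,f})$ the ESO bound $\delta_{v,f}$ need not hold, so I must argue that $\tilde p_i$ cannot escape the envelope in that interval. The cleanest route is a continuity/invariance argument: at $t=0$ we have $|\tilde p_i(0)| < \rho_{p,0,i} = \rho_{p,i}(0)$ by the design of $\bm \rho_{p,0}$, and $z_{p,i}(0) = \tilde p_i(0) - \beta_{p,i}(0) = 0$ since $\beta_{p,i}(0) = \tilde\xi_i(0) = \tilde p_i(0)$; moreover $\dot z_{p,i}(0) = 0$ by the choice $\bm b_p = l_p \tilde{\bm\xi}(0) + \dot{\tilde{\bm\xi}}(0)$, so $\bm s_p(0) = \bm 0$. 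Hence $V_s(0)=0$, and even with the (bounded) disturbance acting from $t=0$, $V_s$ — and therefore $\|\bm s_p\|$ and then $|z_{p,i}|$ — remains within the target ball for all $t \ge 0$, provided $\delta_{v,f}$ is interpreted as a uniform bound on $\|\tilde{\bm\Delta}_v\|$ valid on $[0,\infty)$ (which is reasonable under the boundedness assumption on $\bm\Delta_v, \dot{\bm\Delta}_v$, with the ESO transient absorbed into the constant). I would state this explicitly so the chain $|\tilde p_i(t)| \le |\beta_{p,i}(t)| + |z_{p,i}(t)| < \rho_{p,i}(t)$ holds from $t=0$ onward, completing the proof. \myqed
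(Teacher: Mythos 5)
Your main line of argument is the same as the paper's: substitute \eqref{eq_postion_control} into \eqref{eq_dot_sp} to get $\dot{\bm s}_p = -\bm K_p\bm s_p + \tilde{\bm\Delta}_v$, run a Lyapunov/ISS bound on $\|\bm s_p\|$ using $\bm s_p(0)=\bm 0$, pass through the first-order filter $\dot z_{p,i} = -[\Lambda_p]_{i,i}z_{p,i} + s_{p,i}$ to bound $|z_{p,i}|$ by $\delta_v/([\Lambda_p]_{i,i}\lambda_{\min}(\bm K_p))$, and close with the triangle inequality and the condition on $c_{p,i}$ from Section~\ref{sec_preset_error_trajectory}. That part is correct and matches the paper step for step (the paper cites Theorem~4.18 of \cite{khalil2002nonlinear} and Lemma~1 of \cite{lemma_z} for the two bounding steps; with $\bm s_p(0)=\bm 0$ and $z_{p,i}(0)=0$ these give bounds valid for all $t\ge 0$, not merely ultimate bounds, which is what the theorem needs).

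The genuine gap is in your treatment of the transient interval $[0,t_{v,f})$. Your fix --- reinterpret $\delta_{v,f}$ as a uniform bound on $\|\tilde{\bm\Delta}_v\|$ over all of $[0,\infty)$ with ``the ESO transient absorbed into the constant'' --- does not work as stated. The convergence result of Section~\ref{Sec_Variable-Gain_ESO} only guarantees $\|\tilde{\bm\Delta}_v\|\le\delta_{v,f}$ \emph{after} $t_{v,f}$, and $\delta_{v,f}$ is the small, $\varepsilon$-tunable quantity that the theorem's hypothesis $\delta_{v,f}/([\Lambda_p]_{i,i}\lambda_{\min}(\bm K_p)) < \min\{\rho_{p,\infty,i},\,\rho_{p,0,i}-|\tilde p_i(0)|\}$ relies on being small. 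The pre-convergence estimation error is governed by a separate, generally much larger constant that is \emph{not} tunable to be small (high-gain observers typically peak during the transient), so absorbing it into $\delta_{v,f}$ either makes that hypothesis unsatisfiable or silently changes what $\delta_{v,f}$ means. The paper instead uses a piecewise bound $\delta_v=\delta_{v,b}$ for $t<t_{v,f}$ and $\delta_v=\delta_{v,f}$ for $t\ge t_{v,f}$, and handles the transient by exploiting the shape of the envelope: it imposes the additional design condition $\rho_{p,0,i} > (|\tilde p_i(0)| + \delta_{v,b}/([\Lambda_p]_{i,i}\lambda_{\min}(\bm K_p)) - \rho_{p,\infty,i})\exp(l_p t_{v,f}) + \rho_{p,\infty,i}$, which guarantees $\rho_{p,i}(t) > |\tilde p_i(0)| + \delta_{v,b}/([\Lambda_p]_{i,i}\lambda_{\min}(\bm K_p))$ on $[0,t_{v,f})$, and combines this with the monotonic decrease of $|\beta_{p,i}(t)|$ from $|\tilde p_i(0)|$ to conclude $|\tilde p_i(t)|<\rho_{p,i}(t)$ there. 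In short, the large transient deviation is accommodated by widening the initial envelope, not by shrinking the disturbance constant; your proof needs this (or an equivalent) extra argument for $t<t_{v,f}$ to legitimately claim the bound for all $t\ge 0$.
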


\begin{proof}
\blue{Substituting the position control law \eqref{eq_postion_control} into \eqref{eq_dot_sp}, the closed-loop dynamics of the sliding mode vector $\bm s_p$ is given by $\dot{\bm s}_p =-\bm K \bm s_p + \tilde{\bm \Delta}_v$.
Define a Lyapunov function as $V = 1/2\bm s^T_p \bm s_p$. The time derivative of $V$ is $\dot{V}  = {\bm s}_p^T \dot{\bm s}_p$. Substituting  \eqref{eq_dot_sp} into the expression of $\dot V$ yields 
\begin{equation}
\begin{aligned}
\label{eq_V_dot_analysis}
\dot{V} &= \bm s^T_p (-\bm K_p \bm s_p + \tilde{\bm \Delta}_{v}) \\
&\leq -\lambda_{\min}(\bm K_p) \|\bm s_p\|^2 + \|\bm s_p\| \|\tilde{\bm \Delta}_v\| \\
& \leq -\lambda_{\min}(\bm K_p) \|\bm s_p\|^2 + \delta_{v} \|\bm s_p\|, 
\end{aligned}
\end{equation}
where $\delta_v$ is the upper bound of the estimation error $\tilde{\bm \Delta}_v$ for $t \ge 0$. According to Theorem 4.18 in \cite{khalil2002nonlinear} and $\bm s_p(0) = \bm 0$, the sliding mode vector satisfies $\|\bm s_p(t)\| \le {\delta_v}/{\lambda_{\min}(\bm K_p)}, \ \forall t \ge 0$.
Consequently, the $i$-th element $s_{p,i}(t)$ of $\bm s_p(t)$ satisfy $|s_{p,i}(t)| \leq \delta_v / \lambda_{\min}(\bm K_p)$. From the definition of $\bm s_p$, we have $s_{p,i} = \dot{z}_{p,i} + [\Lambda_p]_{i,i} z_{p,i}$.} According to Lemma~1 in \cite{lemma_z} and the upper bound of $s_{p,i}(t)$, one has
\begin{equation}
\label{eq_pos_deviation_to_preset_trajectory}
|z_{p,i}(t)| \le {\delta_{v}}/({[\Lambda_p]_{i,i}\lambda_{\min}(\bm K_p)}), \  \forall t\ge 0.
\end{equation}

Further, we prove that $ | \tilde{ p}_i(t) |< {\rho}_{p,i}(t), \ \forall t \ge 0$. 
Combining \eqref{eq_pos_deviation_to_preset_trajectory} with the definition of $\bm z_p$, the position tracking error satisfies
\begin{equation}
\begin{aligned}
|\tilde p_i(t)|&=|\tilde{p}_i(t)+ \beta_{p,i}(t)-  \beta_{p,i}(t)|\\
&\le |\tilde {p}_i(t) - \beta_{p,i}(t)|+| \beta_{p,i}(t)|\\
&\le \frac{\delta_{v}}{[\Lambda_p]_{i,i}\lambda_{\min}(\bm K_p)}  + | \beta_{p,i}(t)|
\end{aligned}
\label{eq_pos_bound_tracking_error}
\end{equation}
as $\forall t\ge 0, \ i\in\{1,2,3\}$. \blue{According to Section~\ref{sec_pos_ESO}, $\delta_v$ is defined as a piecewise constant: $\delta_v = \delta_{v,b}$ for $t < t_{v,f}$ and $\delta_v = \delta_{v,f}$ for $t \ge t_{v,f}$. Here, $\delta_{v,b}$ and $\delta_{v,f}$ represent the upper bounds of the estimation error before and after ESO convergence, respectively. Therefore, $\delta_{v,b}$ is much greater than $\delta_{v,f}$. The control parameters are set such that ${\delta_{v,f}}/([\Lambda_p]_{i,i}\lambda_{\min}(\bm K_p)) < \min\{\rho_{p,\infty,i}, \rho_{p,0,i} - |\tilde{p}_{i}(0)| \}$, and the design parameter of $\beta_{p,i}$  is chosen to satisfy $c_{p,i} > {|b_{p,i}|}/(\rho_{p,0,i} - |\tilde{p}_{i}(0)|-{\delta_{v,f}}/([\Lambda_p]_{i,i}\lambda_{\min}(\bm K_p)))$. Then, according to Section~\ref{sec_preset_error_trajectory}, we obtain $|\beta_{p,i}(t)| < \rho_{p,i}(t) -{\delta_{v,f}}/([\Lambda_p]_{i,i}\lambda_{\min}(\bm K_p))$. For the phase $t \ge t_{v,f}$, applying the resulting inequality of $|\beta_{p,i}(t)|$ to \eqref{eq_pos_bound_tracking_error} yields $|\tilde p_i(t)| <  {\delta_{v,f}}/({[\Lambda_p]_{i,i}\lambda_{\min}(\bm K_p)}) + \rho_{p,i}(t) -{\delta_{v,f}}/([\Lambda_p]_{i,i}\lambda_{\min}(\bm K_p)) = \rho_{p,i}(t)$. This means the tacking error remains within the performance envelope for the phase $t \geq t_{v,f}$. For the phase $t < t_{v,f}$, this inclusion can be guaranteed through the design of the performance envelope parameter. Specifically, we set $\rho_{p,0,i} > ( |\tilde{p}_i(0)| + {\delta_{v,b}}/({[\Lambda]_{i,i}\lambda_{\min}(\bm K_p)}) - \rho_{p,\infty,i} ) \exp(l_p t_{v,f}) + \rho_{p,\infty,i}$. Combining the the specified set of parameters $\rho_{p,0,i}$ and the definition of $\rho_{p,i}(t)$, we obtain $ \rho_{p,i}(t_{v,f}) > |\tilde{p}_i(0)| + {\delta_{v,b}}/({[\Lambda]_{i,i}\lambda_{\min}(\bm K_p)})$. From the monotonic decreasing property of $|\beta_{p,i}(t)|$ and $\rho_{p,i}(t)$, one can conclude that $|\beta_{p,i}(t)| \le |\beta_{p,i}(0)|= |\tilde{p}_i(0)|$ and $\rho_{p,i}(t) > \rho_{p,i}(t_{v,f})$, when $t< t_{v,f}.$ Then, we have $ |\beta_{p,i}(t)| \le |\tilde{p}_i(0)|< \rho_{p,i}(t) -{\delta_{v,b}}/({[\Lambda]_{i,i}\lambda_{\min}(\bm K_p)})$.
Substituting this result into \eqref{eq_pos_bound_tracking_error} yields $|\tilde p_i(t)|< \rho_{p,i}(t)$, when $t < t_{v,f}$. Combining the proof results from both phases, it can be concluded that the tracking error remains within the position performance envelope.} 
\end{proof}

\blue{ Theorem~\ref{thm_pos_error} assumes that $\bm {\Delta}_v$ and $\dot{\bm {\Delta}}_v$ are bounded. This assumption is reasonable because $\bm {\Delta}_v$ and $\dot{\bm {\Delta}}_v$ depend on the motion of the quadcopter base and the robotic arm, which are physically constrained.} The analysis of the influence of parameters on the deviation between the position tracking error and the preset error trajectory is given below. The upper bound of this deviation is given in \eqref{eq_pos_deviation_to_preset_trajectory}. 
It shows that the deviation is inversely proportional to $[\Lambda_p]_{i,i}$ and $\lambda_{\min}(\bm K_p)$, and directly proportional to $\delta_{v,f}$. This implies that there are two ways to reduce the deviation. First, the deviation can be reduced by directly increasing the parameters $\bm{\Lambda}_p$ and $\bm{K_p}$. Second, the deviation can also be reduced by reducing the parameter $\varepsilon_v$ to decrease the upper bound $\delta_{v,f}$ of the position ESO estimation error.

 \subsection{Attitude ESO}\label{sec_att_ESO}

\blue{The objective of the attitude ESO is to estimate the dynamic coupling force $\bm \Delta_\omega$ based on the variable-gain ESO method described in Section~\ref{Sec_Variable-Gain_ESO}. }
\blue{From \eqref{eq_dynamics}, the attitude dynamics can be rewritten in a compact form as 
\begin{equation}
\label{eq_att_dynamics}
    \dot {\bm R} = \bm R [\bm \omega]_{\times},\quad
    \dot{\bm \omega} =\bm u_{\omega} + \bm \Delta_\omega,
\end{equation}
where $\bm u_{\omega} =\bm I^{-1}(\bm \tau-\bm \omega\times \bm I\bm \omega)$.
$\bm \Delta_\omega$ is treated as an unknown term and estimated by the attitude ESO. Let $\hat{\bm \Delta}_\omega \in \mathbb{R}^3$ be the estimate of $\bm \Delta_\omega$.
Since the angular velocity $\bm \omega$ is available from onboard sensors, it is utilized to estimate $\bm \Delta_\omega$. Let $\bm \omega$ and $\bm \Delta_\omega$ be the state and the extended state of the variable-gain ESO, respectively.  
Consider the system $\dot {\bm \omega}=\bm u_\omega +\bm{\Delta_{\omega}}$. } For the $i$-th element $\hat {\Delta}_{\omega,i}$ of $\bm \Delta_\omega$, the corresponding ESO is designed as 
\begin{equation}
    \begin{split}
        \dot h_{\omega,i}= {\alpha _{\omega,i} g(e_{\omega,i})}/{\varepsilon_{\omega,i}} &+ u_{\omega,i}, \quad 
e_{\omega,i}=\omega_i-h_{\omega,i}, \\
 \hat \Delta_{\omega,i}&= {\alpha _{\omega,i} g(e_{\omega,i})}/{\varepsilon_{\omega,i}},
    \end{split}
\end{equation}
where $\bm h_{\omega} = [h_{\omega,1}, h_{\omega,2},h_{\omega,3}]^T\in  \mathbb{R}^3$ is an internal auxiliary state, the gain function $g(e_{\omega,i})$ is given in \eqref{eq_function_g}, \blue{$\bm e_\omega = [e_{\omega,1}, e_{\omega,2},e_{\omega,3}]^T \in \mathbb{R}^3$ is the error vector between $\bm \omega$ and $\bm h_\omega$,} $\bm \alpha_\omega= [\alpha_{\omega,1},\alpha_{\omega,2},\alpha_{\omega,3}]^T\in \mathbb{R}^3$ is a positive constant vector, and $\bm \varepsilon_\omega = [\varepsilon_{\omega,1},\varepsilon_{\omega,2},\varepsilon_{\omega,3}]^T\in \mathbb{R}^3$ is a tuning parameter vector. The initial condition is set as $\bm h_{\omega}(0)= \bm \omega(0)$.

\blue{Let $\tilde{\bm \Delta}_\omega = \bm \Delta_\omega - \hat{\bm \Delta}_\omega$ denote the estimation error of the attitude ESO. According to Section~\ref{Sec_Variable-Gain_ESO}, the estimation error of the attitude ESO is bounded, i.e., $\|\tilde{\bm \Delta}_\omega \| = \| \bm \Delta_{\omega}-\hat{\bm \Delta}_{\omega}\|\le \delta_{\omega,f}, \forall t>t_{\omega,f}$, where $ \delta_{\omega,f} \in \mathbb{R}$ is a positive constant.}

\subsection{Attitude Control}\label{sec_att_control}

The objective of attitude control is to calculate the control torque vector to track the desired attitude and constrain the tracking error within the performance envelope.

Let $\bm R_d=[\bm b_1,\bm b_2,\bm b_3]^T\in SO(3)$ be the desired rotation matrix, and $\psi_d \in \mathbb{R}$ the desired yaw angle. Define $\bm a=[\cos(\psi_d),\sin(\psi_d),0]^T$.  
Then, we have
\begin{equation}
\bm b_3=\frac{\bm T}{T}, \ 
\bm b_2=\frac{  \bm b_3 \times \bm a}{\|\bm b_3 \times \bm a\|}, \ 
\bm b_1=\bm b_2\times \bm b_3.
\end{equation}  
The rotation error is $\tilde{\bm R}=\bm R_d^T\bm R\in SO(3)$, and the corresponding quaternion error is $\tilde {\bm q}=[\tilde q_0, \tilde {\bm q} _v ^T]^T \in \mathbb{R}^{4}$ with 
\begin{equation}
\label{eq_q_error}
\tilde q_0=\frac{1}{2}\sqrt{1+\text {tr}(\tilde {\bm R})},\quad \tilde {\bm q} _v= \frac{1}{4\tilde {q} _0}[\tilde {\bm R}-\tilde {\bm R}^T]^{\vee}, 
\end{equation}
where $\text {tr}(\tilde {\bm R})$ denote the trace of $\tilde {\bm R}$, and $[\cdot]^{\vee}$ denotes the vee map, i.e., the inverse of $[\cdot]_{\times}$.
Define the angular velocity error as $\tilde{\bm{\omega}} = \bm{\omega} - \tilde{\bm{R}}\bm{\omega}_d \in \mathbb{R}^3$, where $\bm{\omega}_d = [\bm{R}^T\dot{\bm{R}}_d]^\vee \in \mathbb{R}^3$ is the desired angular velocity. The derivative of the quaternion error is $\dot{\tilde{q}}_0 = -1/2\tilde{\bm{q}}_v^T\tilde{\bm{\omega}}$ and $\dot{\tilde{\bm{q}}}_v = 1/2\bm{Q}\tilde{\bm{\omega}}$, where $\bm{Q} = \tilde{q}_0\bm{E}_3 + [\tilde{\bm{q}}_v]_\times \in \mathbb{R}^{3\times3}$, with $\bm E_3 \in \mathbb{R}^{3 \times 3}$ denoting the identity matrix. The second derivative of $\tilde{\bm{q}}_v$ is $\ddot{\tilde{\bm{q}}}_v = 1/2\dot{\bm{Q}}\tilde{\bm{\omega}} + 1/2\bm{Q}\dot{\tilde{\bm{\omega}}}$, where $\dot{\bm{Q}} = \dot{\tilde{q}}_0\bm{E}_3 + [\dot{\tilde{\bm{q}}}_v]_\times$. Substituting the attitude dynamics \eqref{eq_att_dynamics} into the expression of $\ddot{\tilde{\bm{q}}}_v$ yields
\begin{equation}
\begin{aligned}
\ddot{\tilde{\bm{q}}}_v 
&= \frac{1}{2}\dot{\bm{Q}}\tilde{\bm{\omega}} + \frac{1}{2}\bm{Q}\bm{I}^{-1}(\bm{\tau} - \bm{\omega} \times \bm{I}\bm{\omega}) \\
&\quad - \frac{1}{2}\bm{Q}\dot{\tilde{\bm{R}}}\bm{\omega}_d - \frac{1}{2}\bm{Q}\tilde{\bm{R}}\dot{\bm{\omega}}_d + \frac{1}{2}\bm{Q}\bm{\Delta}_\omega,
\end{aligned}
\end{equation}
which can be rewritten as
\begin{equation}
\label{eq_att_error_dynamics}
\ddot{\tilde{\bm{q}}}_v = \bm{f}(\tilde{\bm{q}}_v, \dot{\tilde{\bm{q}}}_v) + {1}/{2} {\bm Q }\bm I^{-1}\bm \tau +  1/2 {\bm Q }\bm \Delta_\omega,
\end{equation}
where $\bm{f}(\tilde{\bm{q}}_v, \dot{\tilde{\bm{q}}}_v) = 1/2\dot{\bm{Q}}\tilde{\bm{\omega}} - 1/2\bm{Q}\bm{I}^{-1}(\bm{\omega} \times \bm{I}\bm{\omega}) - 1/2\bm{Q}\dot{\tilde{\bm{R}}}\bm{\omega}_d - 1/2\bm{Q}\tilde{\bm{R}}\dot{\bm{\omega}}_d$ denotes the nonlinear term.
The design of the attitude control law consists of two steps.

The first is to design a sliding mode vector to guide the error trajectory toward the preset error trajectory. 
The upper bound function for the attitude performance envelope is denoted by $\bm \rho_{q}(t)\in \mathbb{R}^3$, while the preset error trajectory is denoted by $\bm \beta_q(t) \in \mathbb{R}^3$, which is designed according to Section~\ref{sec_preset_error_trajectory}. Let $\bm \rho_{q,0}\in \mathbb{R}^3$, $\bm \rho_{q,\infty}\in \mathbb{R}^3$, and $l_q\in \mathbb{R}$ be the parameters of $\bm \rho_{q}(t)$. 
Let $\bm c_q \in \mathbb{R}^3$  and $\bm b_q \in \mathbb{R}^3$ be the parameters of $\bm \beta_q(t)$. Define $\bm z_q(t)=\tilde{\bm q}_v(t)- \bm \beta_q(t) \in \mathbb{R}^3$. Then the sliding mode vector is designed as 
\begin{equation}
    \begin{aligned}
    {\bm s}_q =\dot {\bm{z}}_q + \bm{\Lambda}_q {\bm z}_q,
    \end{aligned}
\end{equation}
where $\bm{\Lambda}_q \in \mathbb{R}^{3\times3}$ is a positive diagonal matrix.

The second is to design the tracking control law. \blue{Combining with \eqref{eq_att_error_dynamics},  the time derivative of  $\bm s_q$ is given by}
\begin{equation}
\begin{aligned}
\label{eq_dot_sq}
\dot {\bm s}_q  &=\ddot {\bm{z}}_q + \bm{\Lambda}_q\dot {\bm z}_q\\
 &= \ddot{\tilde{\bm q}}_v - \ddot{\bm \beta}_{q} + \bm{\Lambda}_q\dot {\bm z}_q \\
&= \bm f_q(\tilde{{\bm q}}_v,\dot {\tilde{{\bm q}}}_v) +  \frac{1}{2} {\bm Q }\bm I^{-1}\bm \tau +  \frac{1}{2} {\bm Q }\bm \Delta_\omega -\ddot{\bm \beta}_{q} + \bm{\Lambda}_q\dot {\bm z}_q.
\end{aligned}
\end{equation}
Then, the control torque input  is designed as
\begin{equation}
\label{eq_att_controller}
\bm \tau=2\bm I \bm Q^{-1} (- \bm f_q(\tilde{{\bm q}}_v,\dot {\tilde{{\bm q}}}_v) - \frac{1}{2} {\bm Q }\hat{\bm \Delta}_\omega+ \ddot{\bm \beta}_q-\bm{\Lambda}_q \dot{\bm z}_q - \bm K_q\bm s_q),  
\end{equation}
where $\bm K_q \in \mathbb{R}^{3\times 3}$ is a positive diagonal matrix. \blue{The matrix $\bm{Q}$ is invertible in the control computation. This is because its definition dictates that singularity occurs only when the attitude tracking error reaches 180$^\circ$, which does not arise in practical control scenarios.} The proof that the position tracking error $\tilde{\bm q}_v$ always remains within the performance envelope is given as follows.
\blue{\begin{theorem}
\label{thm_att_error}
Let $\lambda_{\min}(\bm K_q)$ be the minimum eigenvalue of $\bm K_q$, and $[\Lambda_q]_{i,i}$ the $i$-th diagonal element of $\bm \Lambda_q$. Assuming that $\bm \Delta_\omega$ and its derivative $\dot{\bm \Delta}_\omega$ are bounded, for the $i$-th element $\tilde{ q}_{v,i}(t)$ of the tracking error $\tilde{\bm q}_v(t)$, if the parameters are chosen such that ${\delta_{\omega,f}}/([\Lambda_q]_{i,i}\lambda_{\min}(\bm K_q)) < \min\{\rho_{q,\infty,i}, \rho_{q,0,i} - |\tilde{q}_{v,i}(0)| \}$ and $c_{q,i} > {|b_{q,i}|}/(\epsilon_{q,0,i}-{\delta_{\omega,f}}/([\Lambda_q]_{i,i}\lambda_{\min}(\bm K_q)))$, then the attitude tracking error satisfies $ | \tilde{ q}_{v,i}(t) |< {\rho}_{q,i}(t), \ \forall t \ge 0$.
\end{theorem}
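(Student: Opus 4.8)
The plan is to mirror, step for step, the argument already used in the proof of Theorem~\ref{thm_pos_error}, adapting it to the quaternion‑parametrized attitude error dynamics \eqref{eq_att_error_dynamics}. First I would substitute the attitude control law \eqref{eq_att_controller} into the expression \eqref{eq_dot_sq} for $\dot{\bm s}_q$. Since $\tfrac12\bm Q\bm I^{-1}\cdot 2\bm I\bm Q^{-1} = \bm E_3$, the nonlinear term $\bm f_q(\tilde{\bm q}_v,\dot{\tilde{\bm q}}_v)$, the feedforward $\ddot{\bm\beta}_q$, and the cross term $\bm\Lambda_q\dot{\bm z}_q$ all cancel, leaving the closed‑loop dynamics $\dot{\bm s}_q = -\bm K_q\bm s_q + \tfrac12\bm Q\tilde{\bm\Delta}_\omega$, where $\tilde{\bm\Delta}_\omega = \bm\Delta_\omega - \hat{\bm\Delta}_\omega$ is the attitude ESO estimation error from Section~\ref{sec_att_ESO}.

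Next I would control the disturbance gain $\tfrac12\bm Q$. Because $\tilde{\bm q}$ is a unit quaternion, a short computation gives $\bm Q\bm Q^T = \tilde q_0^2\bm E_3 + \big(\|\tilde{\bm q}_v\|^2\bm E_3 - \tilde{\bm q}_v\tilde{\bm q}_v^T\big) = \bm E_3 - \tilde{\bm q}_v\tilde{\bm q}_v^T$, so $\|\bm Q\|_2 \le 1$ and hence $\|\tfrac12\bm Q\tilde{\bm\Delta}_\omega\| \le \tfrac12\|\tilde{\bm\Delta}_\omega\| \le \delta_\omega$, where $\delta_\omega$ denotes the piecewise‑constant bound on $\|\tilde{\bm\Delta}_\omega\|$, equal to $\delta_{\omega,b}$ for $t<t_{\omega,f}$ and $\delta_{\omega,f}$ thereafter (so the $\tfrac12\|\bm Q\|_2$ factor is harmless — it only makes the true effective disturbance smaller than $\delta_\omega$). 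Taking $V = \tfrac12\bm s_q^T\bm s_q$ and differentiating along the closed loop then yields $\dot V \le -\lambda_{\min}(\bm K_q)\|\bm s_q\|^2 + \delta_\omega\|\bm s_q\|$, exactly as in \eqref{eq_V_dot_analysis}. Since the preset error trajectory is built (Section~\ref{sec_preset_error_trajectory}) so that $\bm\beta_q(0) = \tilde{\bm q}_v(0)$ and $\dot{\bm\beta}_q(0) = \dot{\tilde{\bm q}}_v(0)$, we have $\bm z_q(0) = \bm 0$ and $\dot{\bm z}_q(0) = \bm 0$, hence $\bm s_q(0) = \bm 0$; Theorem~4.18 of \cite{khalil2002nonlinear} gives $\|\bm s_q(t)\| \le \delta_\omega/\lambda_{\min}(\bm K_q)$ for all $t\ge 0$, and componentwise $|s_{q,i}(t)| \le \delta_\omega/\lambda_{\min}(\bm K_q)$.

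From $s_{q,i} = \dot z_{q,i} + [\Lambda_q]_{i,i}z_{q,i}$ and Lemma~1 of \cite{lemma_z}, this bound propagates to $|z_{q,i}(t)| \le \delta_\omega/([\Lambda_q]_{i,i}\lambda_{\min}(\bm K_q))$, and the triangle inequality on $\tilde q_{v,i} = z_{q,i} + \beta_{q,i}$ gives $|\tilde q_{v,i}(t)| \le \delta_\omega/([\Lambda_q]_{i,i}\lambda_{\min}(\bm K_q)) + |\beta_{q,i}(t)|$, the attitude analogue of \eqref{eq_pos_bound_tracking_error}. I would then close the estimate in two phases as in Theorem~\ref{thm_pos_error}: for $t\ge t_{\omega,f}$, the hypothesis $c_{q,i} > |b_{q,i}|/(\epsilon_{q,0,i} - \delta_{\omega,f}/([\Lambda_q]_{i,i}\lambda_{\min}(\bm K_q)))$ together with Section~\ref{sec_preset_error_trajectory} forces $|\beta_{q,i}(t)| < \rho_{q,i}(t) - \delta_{\omega,f}/([\Lambda_q]_{i,i}\lambda_{\min}(\bm K_q))$, so the two terms combine to $|\tilde q_{v,i}(t)| < \rho_{q,i}(t)$; for $t<t_{\omega,f}$, choosing $\rho_{q,0,i}$ large enough relative to the pre‑convergence bound $\delta_{\omega,b}$ (the same inequality used in the position proof, with $p\to q$, $v\to\omega$) and using the monotone decay of $|\beta_{q,i}|$ and $\rho_{q,i}$ again yields $|\tilde q_{v,i}(t)| < \rho_{q,i}(t)$. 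Concatenating the two phases proves the claim.

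The main obstacle is not conceptual but one of bookkeeping: checking that the extra $\tfrac12\bm Q$ multiplying $\tilde{\bm\Delta}_\omega$ in the closed loop does not spoil the position‑style estimate (handled by $\|\bm Q\|_2\le 1$ above) and keeping the two‑phase constant chasing consistent with the attitude‑loop constants $\delta_{\omega,b},\delta_{\omega,f},t_{\omega,f},\epsilon_{q,0,i}$. One should also record — as the paper already notes near \eqref{eq_att_controller} — that $\bm Q^{-1}$ exists throughout, since the attitude tracking error never reaches $180^\circ$ in practice, so the control law and the cancellation leading to $\dot{\bm s}_q = -\bm K_q\bm s_q + \tfrac12\bm Q\tilde{\bm\Delta}_\omega$ are well defined.
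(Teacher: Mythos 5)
Your proposal is correct and follows essentially the same route as the paper's proof: substitute \eqref{eq_att_controller} into \eqref{eq_dot_sq} to get $\dot{\bm s}_q = -\bm K_q\bm s_q + \tfrac12\bm Q\tilde{\bm\Delta}_\omega$, bound the $\bm Q$ factor using the unit-quaternion property, apply the Lyapunov/comparison argument with $\bm s_q(0)=\bm 0$, propagate through Lemma~1 of \cite{lemma_z}, and close with the same two-phase ($t< t_{\omega,f}$ versus $t\ge t_{\omega,f}$) envelope argument. The only difference is cosmetic: you bound $\|\bm Q\|_2\le 1$ via $\bm Q\bm Q^T=\bm E_3-\tilde{\bm q}_v\tilde{\bm q}_v^T$ (slightly sharper, giving $\tfrac12\delta_\omega$), whereas the paper uses the cruder triangle inequality $\|\bm Q\|\le|\tilde q_0|+\|[\tilde{\bm q}_v]_\times\|\le 2$ to reach the same working bound $\delta_\omega$.
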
}

\begin{proof}
\blue{Substituting \eqref{eq_att_controller} into \eqref{eq_dot_sq}, the closed-loop dynamics of the sliding mode vector $\bm s_q$ is given by $\dot{\bm s}_p =-\bm K_q \bm s_q + {1}/{2}\bm Q\tilde{\bm \Delta}_\omega$.
Because the unit quaternion satisfies $\|\tilde q_0 \bm E_3\| = |\tilde q_0| \le 1$ and $\|[\tilde{\bm q}_v]_\times\| = \|\tilde{\bm q}_v\| \le 1$, one has ${1}/{2}\|\bm Q \tilde{\bm \Delta}_\omega\| \le {1}/{2} \|\bm Q\|\cdot\|\tilde{\bm \Delta}_\omega\| 
\le {1}/{2} (|\tilde q_0| + \|[\bm {\tilde q}_v]_{\times}\|)\|\tilde{\bm \Delta}_\omega\| \le \|\tilde{\bm \Delta}_\omega\| \le \delta_{\omega}$, where $\delta_\omega$ is the upper bound of the estimation error $\tilde{\bm \Delta}_\omega$ for all $t\ge 0$. Define a Lyapunov function as $V = 1/2\bm s^T_q \bm s_q$. The time derivative of $V$ is $\dot{V} ={\bm s}_q^T \dot{\bm s}_q$. 
Substituting \eqref{eq_dot_sq} into the expression of $\dot V$ yields 
\begin{equation}
\label{eq_V_dot_analysis_q}
\begin{aligned}
\dot{V} &= \bm s^T_q (-\bm K_q \bm s_q + \frac{1}{2}\bm Q\tilde{\bm \Delta}_{\omega}) \\
&\leq -\lambda_{\min}(\bm K_q) \|\bm s_q\|^2 + \frac{1}{2}\|\bm s_q\| \|\bm Q\| \|\tilde{\bm \Delta}_\omega\| \\
& \leq -\lambda_{\min}(\bm K_q) \|\bm s_q\|^2 + \delta_{\omega} \|\bm s_q\|.\\
\end{aligned}
\end{equation}
According to Theorem~4.18 in \cite{khalil2002nonlinear} and  $\bm s_q(0) = \bm 0$, the sliding mode vector satisfies $|\bm s_q(t)| \le {\delta_\omega}/{\lambda_{\min}(\bm K_q)}, \ \forall t \ge 0$.
Consequently, the $i$-th element $s_{q,i}(t)$ of $\bm s_q(t)$ satisfies $|s_{q,i}(t)| \leq \delta_\omega / \lambda_{\min}(\bm K_q)$. From the definition of $\bm s_q$, we have $s_{q,i} = \dot{z}_{q,i} + [\Lambda_q]_{i,i} z_{q,i}$. According to Lemma~1 in \cite{lemma_z} and the upper bound of $s_{q,i}(t)$, one can conclude that
\begin{equation}
\label{eq_att_deviation_to_preset_trajectory}
|z_{q,i}(t)| \le \frac{\delta_{\omega}}{[\Lambda_q]_{i,i}\lambda_{\min}(\bm K_q)}, \  \forall t\ge 0.
\end{equation}
}

\blue{Further, we prove that $ | \tilde{q}_{v,i}(t) |< {\rho}_{q,i}(t), \ \forall t \ge 0$. Combining \eqref{eq_att_deviation_to_preset_trajectory} with the definition of $\bm z_q$, the attitude tracking error satisfies
\begin{equation}
\begin{aligned}|\tilde q_{v,i}(t)| &=|\tilde{q}_{v,i}(t)+ \beta_{q,i}(t)- \beta_{q,i}(t)|\\
&\le |\tilde {q}_{v,i}(t) - \beta_{q,i}(t)|+| \beta_{q,i}(t)|\\
&\le \frac{\delta_{\omega}}{[\Lambda_q]_{i,i}\lambda_{\min}(\bm K_q)}  + | \beta_{q,i}(t)|,
\end{aligned}
\label{eq_att_bound_tracking_error}
\end{equation}
as $\forall t\ge 0, \ i\in\{1,2,3\}$. According to Section~\ref{sec_att_ESO}, $\delta_\omega$ is defined as a piecewise constant: $\delta_\omega = \delta_{\omega,b}$ for $t < t_{\omega,f}$ and $\delta_\omega = \delta_{\omega,f}$ for $t \ge t_{\omega,f}$. Here, $\delta_{\omega,b}$ and $\delta_{\omega,f}$ represent the upper bounds of the estimation error before and after attitude ESO convergence, respectively. Therefore, $\delta_{\omega,b}$ is much greater than $\delta_{\omega,f}$. The control parameters are set such that ${\delta_{\omega,f}}/([\Lambda_q]_{i,i}\lambda_{\min}(\bm K_q)) < \min\{\rho_{q,\infty,i}, \rho_{q,0,i} - |\tilde{q}_{v,i}(0)| \}$, and the design parameter of $\beta_{q,i}$ is chosen to satisfy $c_{q,i} > {|b_{q,i}|}/(\rho_{q,0,i} - |\tilde{q}_{v,i}(0)|-{\delta_{\omega,f}}/([\Lambda_q]_{i,i}\lambda_{\min}(\bm K_q)))$. Then, according to Section~\ref{sec_preset_error_trajectory}, we obtain $|\beta_{q,i}(t)| < \rho_{q,i}(t) -{\delta_{\omega,f}}/([\Lambda_q]_{i,i}\lambda_{\min}(\bm K_q))$. For the phase $t \ge t_{\omega,f}$, applying the resulting inequality of $|\beta_{q,i}(t)|$ to \eqref{eq_att_bound_tracking_error} yields $|\tilde q_{v,i}(t)| < {\delta_{\omega,f}}/({[\Lambda_q]_{i,i}\lambda_{\min}(\bm K_q)}) + \rho_{q,i}(t) -{\delta_{\omega,f}}/({[\Lambda_q]_{i,i}\lambda_{\min}(\bm K_q)}) = \rho_{q,i}(t)$. This means the tracking error remains within the performance envelope for the phase $t \geq t_{\omega,f}$. For the phase $t < t_{\omega,f}$, this inclusion can be guaranteed through the design of the attitude performance envelope parameter. Specifically, we set $\rho_{q,0,i} > ( |\tilde{q}_{v,i}(0)| + {\delta_{\omega,b}}/({[\Lambda_q]_{i,i}\lambda_{\min}(\bm K_q)}) - \rho_{q,\infty,i} ) \exp(l_q t_{\omega,f}) + \rho_{q,\infty,i}$. Combining the specified set of parameters $\rho_{q,0,i}$ and the definition of $\rho_{q,i}(t)$, we obtain $ \rho_{q,i}(t_{\omega,f}) > |\tilde{q}_{v,i}(0)| + {\delta_{\omega,b}}/({[\Lambda_q]_{i,i}\lambda_{\min}(\bm K_q)})$. From the monotonic decreasing property of $|\beta_{q,i}(t)|$ and $\rho_{q,i}(t)$, one can conclude that $|\beta_{q,i}(t)| \le |\beta_{q,i}(0)|= |\tilde{q}_{v,i}(0)|$ and $\rho_{q,i}(t) > \rho_{q,i}(t_{\omega,f})$, when $t< t_{\omega,f}.$ Then, we have $ |\beta_{q,i}(t)| \le |\tilde{q}_{v,i}(0)|< \rho_{q,i}(t) -{\delta_{\omega,b}}/({[\Lambda_q]_{i,i}\lambda_{\min}(\bm K_q)})$.
Substituting this result into \eqref{eq_att_bound_tracking_error} yields $|\tilde q_{v,i}(t)|< \rho_{q,i}(t)$, when $t < t_{\omega,f}$. Combining the proof results from both phases, it can be concluded that the tracking error remains within the attitude performance envelope.}
\end{proof}

\blue{Theorem~\ref{thm_att_error} assumes that $\bm {\Delta}_\omega$ and $\dot{\bm {\Delta}}_\omega$ are bounded. This assumption is reasonable because $\bm {\Delta}_\omega$ and $\dot{\bm {\Delta}}_\omega$ depend on the motion of the quadcopter base and the robotic arm, which are physically constrained.} The analysis of the influence of parameters on the deviation between the attitude tracking error and the preset error trajectory is given below. The upper bound of the deviation is given in \eqref{eq_att_deviation_to_preset_trajectory}. It shows that the deviation is inversely proportional to $[\Lambda_q]_{i,i}$ and $\lambda_{\min}(\bm K_q)$, and directly proportional to $\delta_{\omega,f}$. \blue{Consequently, to reduce this deviation, one can increase the control gains $\bm{\Lambda}_q$ and $\bm{K}_q$, and reduce the parameter $\bm \varepsilon_\omega$ to lower the upper bound $\delta_{\omega,f}$ of the attitude ESO estimation error.}

\begin{figure*}[t]
	\centering
	\includegraphics[width=1\textwidth]{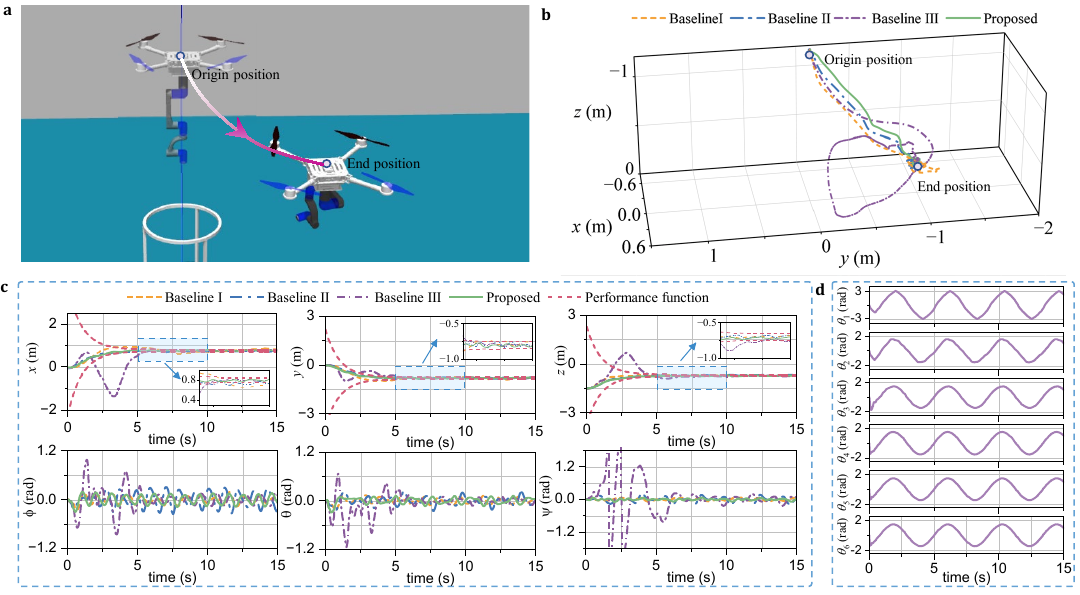}
	\caption{Results of the setpoint tracking experiment. (a) Visual description of the experiment. (b) The trajectories of the quadcopter base. (c) Position and attitude tracking results of the quadcopter base. (d) Tracking results of robotic arm joints.}
	\label{fig_setpoint_tracking}
\end{figure*}

\begin{figure*}[t]
	\centering
	\includegraphics[width=1\textwidth]{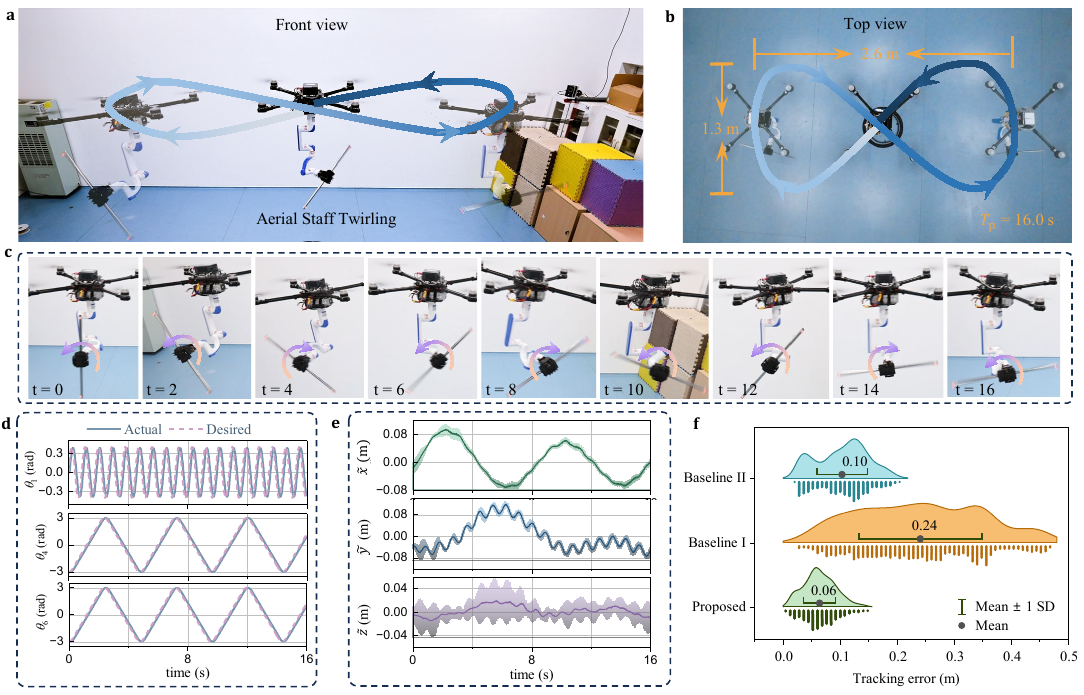}
	\caption{Results of the aerial staff twirling experiment. \blue{(a) Visual description of the experiment. The reference trajectory follows a figure-eight curve defined by $x= 0.65\sin (4\pi t/T_p)$ and $y = 1.3\sin(2\pi t/T_p)$, with the trajectory period set to $T_p=16$~s. (b) Top view of the experiment.  (c) Snapshots of the staff twirling. (d) Tracking results of the 1st, 4th, and 6th joints of the robotic arm. (e) Position tracking errors of the quadcopter base. The error plots depict the mean (solid line) and standard deviation (shaded region) calculated from ten repeated trials. (f) Raincloud plot of position tracking error distributions for the quadcopter base under three control methods.}}
	\label{fig_aerial_staff_twirling}
\end{figure*}

\begin{figure*}[t]
	\centering
	\includegraphics[width=1\textwidth]{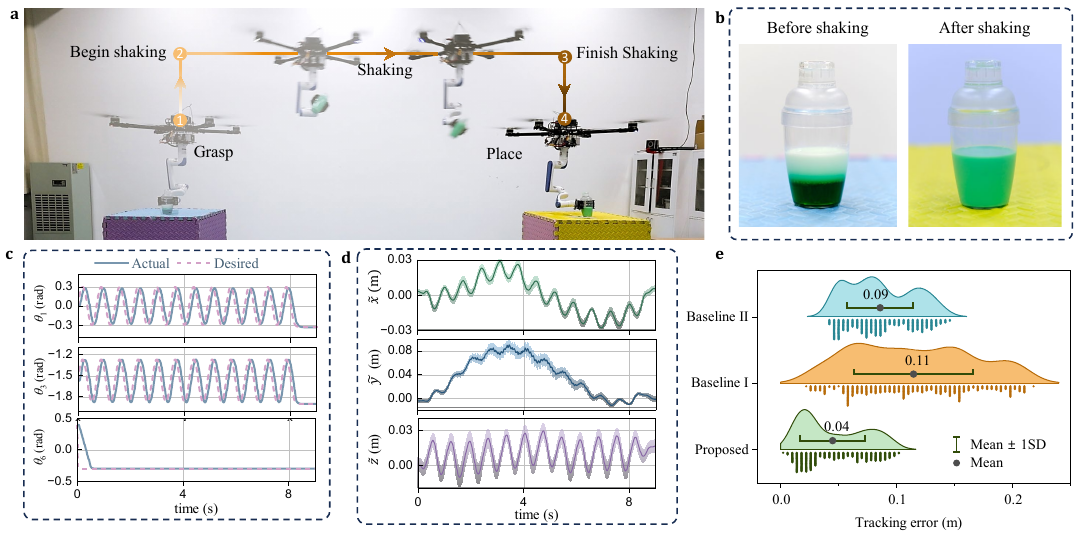}
	\caption{Results of the aerial mixology experiment. (a) Visual description of the experiment. (b) Pre-shake layered materials and post-shake cocktail. (c) Tracking results of the 1st, 2nd, and 6th robotic arm joints. (d) Quadcopter position tracking errors during shaking. The error plots depict the mean (solid line) and standard deviation (shaded region) calculated from ten repeated trials. \blue{(e) Raincloud plot of position tracking error distributions for the quadcopter base under three control methods.}}
	\label{fig_aerial_mixology}
\end{figure*}

\begin{figure*}[t]
	\centering
	\includegraphics[width=1\textwidth]{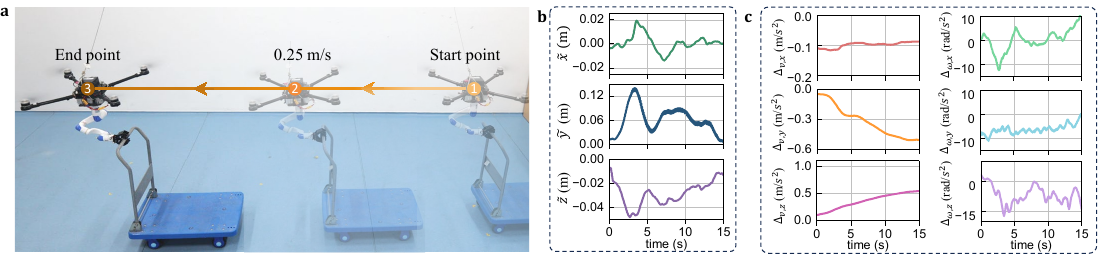}
	% \captionsetup{labelfont={color=blue}}
\caption{Results of the aerial cart-pulling experiment. (a) Visual description of the experiment. (b) Position tracking errors of the quadcopter base throughout the entire pulling process.  (c) Estimation results of the position ESO (left) and attitude ESO (right). }
	\label{fig_aerial_cart_pulling}
\end{figure*}

\section{\blue{Numerical Validation}}

The aerial manipulator platform used in the simulation consists of a quadcopter and a six-DOF robotic arm. The quadcopter has a mass of 5.40~kg and a wheelbase of 0.90~m. The robotic arm has a mass of 2.32~kg. Its workspace resembles a hemisphere with a radius of 0.38~m centered at the robotic arm base.

The same set of control parameters is used in both the simulation and the experiments on the real platform. For position control, the parameters of $\bm \rho_p$ are set to $\bm \rho_{p,0} = [3,3,3]^{T}$,  $\bm \rho_{p,\infty} = [0.05,0.05,0.05]^{T}$, and $l_p = 1$. 
The parameters of $\bm \beta_p$ are selected as $\bm c_{p} = [5,5,5]^T$. The position ESO parameters are set as \blue{$\bm \alpha_{v} =[0.1,0.1,0.1]^T$~s$^{-1}$ and $\bm \varepsilon_v=[0.5,0.5,0.5]^T$. The control gains are $\bm{K}_{p}=\text{diag}([3,3,3])$~s$^{-1}$ and $\bm{\Lambda}_{p}=\text{diag}([3,3,3])$~s$^{-1}$. }For attitude control, the parameters of $\bm \rho_q$ are set to  $\bm \rho_{q,0} = [0.4,0.4,0.4]^{T}$, $\bm \rho_{q,\infty} = [0.06,0.06,0.06]^{T}$, and $l_q = 1$. The parameters of $\bm \beta_q$ are selected as $\bm c_{q} = [5,5,5]^T$. \blue{The attitude ESO parameters are set as $\bm \alpha_{\omega} =[1,1,1]^T$~s$^{-1}$ and $\bm \varepsilon_q=[0.25,0.25,0.25]^T$. The control gains are $\bm{K}_{q}=\text{diag}([12,12,10])$~s$^{-2}$ and $\bm{\Lambda}_{q}=\text{diag}([14,14,10])$~s$^{-2}$. }For both position and attitude control, the parameters of the nonlinear gain function $g(e)$ are chosen as $w=0.5$ and $d=5$.

To simulate strong dynamic coupling, the robotic arm performs fast swinging motions, as shown in Fig.~\ref{fig_setpoint_tracking}d. In this experiment, the end-effector can reach a translational speed of 0.76~m/s and an angular velocity of 9.28~rad/s relative to the quadcopter base. To demonstrate the effectiveness of the proposed algorithm, three baseline methods are compared. Baseline I is \blue{the PX4 1.13.0 flight controller, selected due to its widespread adoption in aerial manipulators \cite{gomez2018methods_px4}.} Baseline II shares the same architecture as the proposed method, but does not include compensation for dynamic coupling via the variable-gain ESOs. Baseline III also utilizes the same architecture as the proposed method, but excludes the preset error trajectory. \blue{We selected these three baseline methods because they represent the prevailing control strategies for aerial manipulators: standard PID control, intrinsic robust nonlinear control, and observer-based nonlinear control.}

\begin{table}[htbp]
% \color{blue}
\centering
% \captionsetup{labelfont={color=blue}}
\caption{  Quantitative Comparison of Position Tracking Errors (Unit: cm)}
\label{tab_comparison}
\renewcommand{\arraystretch}{1.1} 
\begin{tabular}{llcc}
\toprule
\textbf{Scenarios} & \textbf{Method} & \textbf{Mean $\pm$ SD} & \textbf{Maximum} \\
\midrule
\multirow{4}{*}{Setpoint} 
 & Baseline I   & $5.54 \pm 2.94$ & 17.13 \\
 & Baseline II  & $5.29 \pm 2.18$ & 12.76 \\
 & Baseline III & $2.63 \pm 0.96$ & 5.48  \\
 & \textbf{Proposed} & $\mathbf{2.35 \pm 1.03}$ & $\mathbf{4.57}$ \\
\midrule
\multirow{4}{*}{Circle} 
 & Baseline I   & $6.13 \pm 2.84$ & 13.88 \\
 & Baseline II  & $3.57 \pm 1.80$ & 9.26  \\
 & Baseline III & $2.82 \pm 1.08$ & 6.59  \\
 & \textbf{Proposed} & $\mathbf{2.78 \pm 0.96}$ & $\mathbf{6.14}$ \\
\midrule
\multirow{4}{*}{Figure-eight} 
 & Baseline I   & $8.86 \pm 4.51$ & 19.42 \\
 & Baseline II  & $6.53 \pm 2.96$ & 14.96 \\
 & Baseline III & $4.12 \pm 1.32$ & 8.02  \\
 & \textbf{Proposed} & $\mathbf{4.04 \pm 1.32}$ & $\mathbf{7.87}$ \\
\bottomrule
\end{tabular}
\end{table}

Figs.~\ref{fig_setpoint_tracking}b-c show the tracking performance of the four methods in the setpoint tracking task. \blue{In this task, the quadcopter base flies from the initial position $(0, 0, -1.5)$~m to the target position $(0.8, -0.8, -0.7)$~m. } The steady-state mean tracking error is defined as the mean position error of the quadcopter base after the system first enters the steady-state boundary $\bm \rho_{p,\infty}$. The steady-state mean tracking position errors for the three baseline methods are 5.54~cm, 5.29~cm, and 2.63~cm, respectively, while that of the proposed method is 2.35~cm. The proposed method reduces the tracking errors by 57.6\%, 52.7\%, and 10.7\%, respectively. This result demonstrates that the proposed method achieves higher accuracy after entering the steady-state boundary. This is because the ESO can compensate for the coupling disturbances and torques induced by the fast swinging of the robotic arm. The tracking error of Baseline III is significantly larger before reaching $\bm \rho_{p,\infty}$ (see Fig.~\ref{fig_setpoint_tracking}c). In contrast, the proposed method strictly confines the error within the prescribed performance boundary. This demonstrates that the introduction of the preset error trajectory can indeed constrain the actual tracking error within the performance envelope.

\blue{Furthermore, we present statistical results for three command tracking tasks: setpoint tracking, circular trajectory tracking (radius: $1.5 \text{ m}$, period: $16 \text{ s}$), and figure-eight trajectory tracking (major axis: $3.0 \text{ m}$, period: $16 \text{ s}$). In all tasks, the robotic arm follows the same fast-swinging motion as shown in Figs.~\ref{fig_setpoint_tracking}d. A detailed quantitative comparison of the three scenarios is summarized in Table~\ref{tab_comparison}. The proposed method achieves the highest precision across all tasks. Specifically, it yields the lowest overall mean tracking error in all three dimensions for setpoint, circular, and figure-eight tracking, with values of 2.35~cm, 2.78~cm, and 4.04~cm, respectively. These results demonstrate that the proposed method effectively suppresses tracking deviations caused by strong dynamic coupling and exhibits significantly enhanced robustness compared to the baseline methods.}

\section{\blue{Experimental Verification}}

\blue{This section presents the experimental results validating the proposed method. }

\subsection{\blue{Experimental Setup}}

The aerial manipulator platform used in the experiments consists of a quadcopter, a Ti5 Robot Eblm-1 robotic arm, and a rigid two-finger gripper (see Fig.~\ref{fig_main_mixology}). \blue{The quadcopter has a mass of 5.40~kg and a wheelbase of 0.90~m. The robotic arm has a mass of 2.32~kg. The gripper has a mass of 206~g. The proposed flight control runs on a Pixhawk 4 autopilot. The trajectory for each task is computed within the ROS environment on an onboard Intel NUC i7 computer.} \blue{The control commands for the robotic arm and the gripper are executed via two independent paths. For the robotic arm, commands are first sent from the onboard computer to its embedded controller via serial communication (UART protocol), and then forwarded to its six motors through a CAN bus. For the gripper, commands are transmitted wirelessly from the onboard computer to the gripper’s embedded controller using the TCP protocol, and the controller subsequently drives the DS3218 servo with the generated PWM signals.}

The experiments are conducted in a Vicon system, which provides accurate position measurements of the quadcopter base and the end-effector of the aerial manipulator. \blue{The measurement data of the Vicon system is sent to a ground control station through an Ethernet switch. A 5 GHz wireless router is used to connect the ground control station and the aerial manipulator. The ground control station sends the measurement data and the control command to the aerial manipulator at a frequency of 100 Hz. The aerial manipulator sends the state data to the ground control station with a frequency of 50 Hz.}

\subsection{Example 1: Aerial Staff Twirling}
The objective of this experiment is to validate that the proposed method achieves high-precision trajectory tracking under strong dynamic coupling. Compared to the previous example, this experiment is more challenging. The quadcopter base tracks a figure-eight trajectory (Fig.~\ref{fig_aerial_staff_twirling}a,b) while the robotic arm joints swing rapidly as shown in Fig.~\ref{fig_aerial_staff_twirling}d. Additionally, a staff mounted on the robotic arm’s end-effector is continuously rotated at a speed of 5.6~rad/s during flight (see Fig.~\ref{fig_aerial_staff_twirling}c,d).
The end effector can reach a speed of 0.51~m/s relative to the quadcopter base in the experiment. 

Fig.~\ref{fig_aerial_staff_twirling}e shows position tracking errors of the quadcopter base across ten repeated trials. The mean position errors in the $x$-, $y$-, and $z$- directions are 4.24~cm, 3.97~cm, and 0.79~cm, with standard deviations of 4.86~cm, 4.86~cm and 0.95~cm, respectively. \blue{ To further highlight the advantage of the proposed algorithm, we employed a comparative raincloud plot to visually contrast the performance of the proposed method with that of Baseline~I and Baseline~II (see Fig.~\ref{fig_aerial_staff_twirling}f). The results show that the error distribution of the proposed method is highly concentrated. Moreover, it exhibits the smallest overall mean tracking error in all three dimensions, which is 0.06~m. In contrast, Baseline I and Baseline II exhibit significantly more dispersed error distributions, with larger mean errors of 0.24~m and 0.10~m, respectively. These findings clearly demonstrate the proposed method performs better in the aerial staff twirling experiment than the two baseline methods. }

\subsection{Example 2: Aerial Mixology}

This experiment aims to verify that the proposed method enables the aerial manipulator to perform an aerial mixology task. In this experiment, we prepare a cocktail using an aerial manipulator (see Fig.~\ref{fig_aerial_mixology}a). The aerial mixology experiment consists of three steps. First, the aerial manipulator flies to a position and grasps the shaker that contains the milk and Crème de Menthe. Second, the aerial manipulator flies towards a target location while the robotic arm executes rapid shaking motions to thoroughly mix the materials (see Fig.~\ref{fig_aerial_mixology}b). Third, the prepared cocktail is placed at the designated position. The combined mass of the shaker and the cocktail is 220~g, and the mass of the gripper is 206~g. In the mixology process, the end-effector can achieve a velocity of 1.02~m/s and an acceleration of~5.10 m/s$^2$ relative to the quadcopter base.

Fig.~\ref{fig_aerial_mixology}d shows the quadcopter base's tracking errors over ten repeated trials. During the shaking phase, the mean position errors in the $x$-, $y$-, and $z$- directions are 1.25~cm, 4.04~cm, and 0.96~cm, with standard deviations of 1.49~cm, 3.25~cm, and 1.04~cm, respectively. 
\blue{Moreover, the raincloud plot in Fig.~\ref{fig_aerial_mixology}e demonstrates that the proposed method significantly outperforms Baseline~I and Baseline~II. The proposed method exhibits a highly concentrated error distribution, with an overall mean tracking error in three directions of 0.04 m. In contrast, the two baseline methods show more dispersed error distributions, with overall mean tracking errors in three directions of 0.11 m and 0.09 m, respectively. These findings confirm that the proposed method achieves higher tracking accuracy in the presence of unknown payloads and dynamic coupling.}

\subsection{Example 3: Aerial Cart-Pulling}

\blue{This experiment is designed to validate the disturbance rejection performance of the proposed method. During the experiment, the aerial manipulator first flies to a position directly above the handle of the cart and uses a gripper to grasp it. The quadcopter base then moves leftward with a velocity of 0.25~m/s, pulling the cart over a distance of 3.2~m (see Fig.~\ref{fig_aerial_cart_pulling}a). The cart weighs 14.6~kg. Throughout the pulling process, ground friction is generated, which introduces disturbances to the motion of the aerial manipulator.}

\blue{ Fig.~\ref{fig_aerial_cart_pulling}b shows the position tracking errors of the quadcopter base during the pulling process. When pulling begins, the error along the y-axis (pulling direction) rises rapidly. This occurs because the cart exerts a significant disturbance force and torque on the quadcopter base. The position and attitude ESOs estimate the disturbances, and feed the estimation results back to the flight controller for disturbance rejection. Fig.~\ref{fig_aerial_cart_pulling}c presents the estimation results of these two ESOs. The mean tracking errors in the three directions are 0.47~cm, 6.21~cm, and 2.88~cm, respectively. This experiment demonstrates that despite the aerial manipulator’s end-effector being subjected to significant external disturbances, the proposed method is still able to accomplish the task successfully.}

\section{Conclusion}

This paper presents a prescribed performance motion control framework for aerial manipulators. The approach uses variable-gain ESOs to estimate dynamic coupling. In addition, a preset error trajectory is introduced to constrain the actual tracking error within the performance envelope. Four experiments were conducted to validate the proposed method. First, an ablation study was conducted to validate the improvement in tracking accuracy contributed by the variable-gain ESOs and the preset error trajectory. Second, in the aerial staff twirling experiment, the aerial manipulator rapidly rotated a staff while the quadcopter base accurately tracked a complex lemniscate (figure-eight) trajectory. The high tracking precision further demonstrates the effectiveness of the proposed algorithm in scenarios involving aggressive robotic arm motion. Third, an aerial mixology experiment was performed, in which the aerial manipulator successfully mixed a cocktail and transported the shaker from one location to a designated position. \blue{Finally, in the aerial cart-pulling experiment, the aerial manipulator successfully pulled a cart leftward, verifying the proposed method's capability to reject external disturbances exerted on the end-effector.}
These results further verify the robustness of the proposed algorithm. Although the proposed algorithm has achieved high performance, its performance may decline when the end-effector interacts with the environment due to unmodeled contact-induced disturbances. Therefore, future work will focus on enhancing adaptability through reinforcement learning \cite{zhao2025RLBook} for contact-rich scenarios.

\bibliography{REF} % if no referece is cited before, there will be an error

@article{zhang2019robust,
  title={Robust control of an aerial manipulator based on a variable inertia parameters model},
  author={Zhang, Guangyu and He, Yuqing and Dai, Bo and Gu, Feng and Han, Jianda and Liu, Guangjun},
  journal={IEEE Transactions on Industrial Electronics},
  volume={67},
  number={11},
  pages={9515--9525},
  year={2019},
  publisher={IEEE}
}

@inproceedings{jimenez2015aerial,
  title={Aerial manipulator for structure inspection by contact from the underside},
  author={Jimenez-Cano, AE and Braga, J and Heredia, Guillermo and Ollero, An{\'\i}bal},
  booktitle={2015 IEEE/RSJ international conference on intelligent robots and systems (IROS)},
  pages={1879--1884},
  year={2015},
  organization={IEEE}
}

@article{bodie2020active,
  title={Active interaction force control for contact-based inspection with a fully actuated aerial vehicle},
  author={Bodie, Karen and Brunner, Maximilian and Pantic, Michael and Walser, Stefan and Pf{\"a}ndler, Patrick and Angst, Ueli and Siegwart, Roland and Nieto, Juan},
  journal={IEEE Transactions on Robotics},
  volume={37},
  number={3},
  pages={709--722},
  year={2020},
  publisher={IEEE}
}

@ARTICLE{cao2023eso,
  author={Cao, Huazi and Li, Yongqi and Liu, Cunjia and Zhao, Shiyu},
  journal={IEEE Transactions on Automation Science and Engineering}, 
  title={{ESO}-Based Robust and High-Precision Tracking Control for Aerial Manipulation},
  year={2024},
  volume={21},
  number={2},
  pages={2139-2155},}

@article{bodie2021dynamic,
  title={Dynamic end effector tracking with an omnidirectional parallel aerial manipulator},
  author={Bodie, Karen and Tognon, Marco and Siegwart, Roland},
  journal={IEEE Robotics and Automation Letters},
  volume={6},
  number={4},
  pages={8165--8172},
  year={2021},
  publisher={IEEE}
}

@book{khalil2002nonlinear,
	title={Nonlinear systems},
	author={Khalil, Hassan K},
	volume={3},
	year={2002},
	publisher={Prentice hall Upper Saddle River, NJ}
}

@article{chen2022adaptive,
  title={Adaptive sliding-mode disturbance observer-based finite-time control for unmanned aerial manipulator with prescribed performance},
  author={Chen, Yanjie and Liang, Jiacheng and Wu, Yangning and Miao, Zhiqiang and Zhang, Hui and Wang, Yaonan},
  journal={IEEE Transactions on Cybernetics},
  volume={53},
  number={5},
  pages={3263--3276},
  year={2022},
  publisher={IEEE}
}

@article{liang2022adaptive,
  title={Adaptive prescribed performance control of unmanned aerial manipulator with disturbances},
  author={Liang, Jiacheng and Chen, Yanjie and Wu, Yangning and Miao, Zhiqiang and Zhang, Hui and Wang, Yaonan},
  journal={IEEE Transactions on Automation Science and Engineering},
  year={2022},
  publisher={IEEE}
}

@article{lemma_z,
author = {HOU, Mingzhe and Tan, Feng and Han, Fei and Duan, Guangren},
year = {2019},
month = {09},
pages = {},
title = {Adaptive sliding mode control of uncertain nonlinear systems with preassigned settling time and its applications},
volume = {29},
journal = {International Journal of Robust and Nonlinear Control},
doi = {10.1002/rnc.4729}
}

@ARTICLE{preset-trajectory,
  author={Shi, Wenrui and Hou, Mingzhe and Duan, Guangren},
  journal={IEEE Transactions on Automatic Control}, 
  title={A Preset-Trajectory-Based Singularity-Free Preassigned Performance Control Approach}, 
  year={2024},
  volume={69},
  number={9},
  pages={6183-6190},
  keywords={Vectors;Steady-state;Nonlinear systems;Uncertainty;Control design;Transient analysis;Trajectory;Preassigned performance control (PPC);preset trajectory (PT);singularity;sliding mode control},
  doi={10.1109/TAC.2024.3371353}}

@ARTICLE{preset-trajectory2025,
  author={Shi, Wenrui and Keliris, Christodoulos and Hou, Mingzhe and Duan, Guangren and Polycarpou, Marios M.},
  journal={IEEE Transactions on Automatic Control}, 
  title={Preset-Trajectory-Based Tracking Control of a Class of Mismatched Uncertain Systems}, 
  year={2025},
  volume={70},
  number={1},
  pages={526-533},
  keywords={Trajectory;Uncertain systems;Uncertainty;Backstepping;Technological innovation;Steady-state;Faces;Backstepping;mismatched uncertainties;practical prescribed-time (PPT) control;preassigned performance control (PPC);preset-trajectory-based control},
  doi={10.1109/TAC.2024.3433070}}

@article{2021CESO,
title = {A new extended state observer for uncertain nonlinear systems},
journal = {Automatica},
volume = {131},
pages = {109772},
year = {2021},
issn = {0005-1098},
doi = {https://doi.org/10.1016/j.automatica.2021.109772},
url = {https://www.sciencedirect.com/science/article/pii/S0005109821002922},
author = {Maopeng Ran and Juncheng Li and Lihua Xie},
keywords = {Extended state observer (ESO), Uncertain nonlinear systems, Output feedback control},
}

@article{nava2019direct,
  title={Direct force feedback control and online multi-task optimization for aerial manipulators},
  author={Nava, Gabriele and Sabl{\'e}, Quentin and Tognon, Marco and Pucci, Daniele and Franchi, Antonio},
  journal={IEEE Robotics and Automation Letters},
  volume={5},
  number={2},
  pages={331--338},
  year={2019},
  publisher={IEEE}
}

@article{cao2024motion,
  title={Motion planning for aerial pick-and-place with geometric feasibility constraints},
  author={Cao, Huazi and Shen, Jiahao and Liu, Cunjia and Zhu, Bo and Zhao, Shiyu},
  journal={IEEE Transactions on Automation Science and Engineering},
  year={2025},
  volume={22},
  number={},
  pages={2577-2594},
  publisher={IEEE}
}

@ARTICLE{2022_shanghaijioatong_assembly,
  author={Dong, Wei and Ma, Zhao and Sheng, Xinjun and Zhu, Xiangyang},
  journal={IEEE/ASME Transactions on Mechatronics}, 
  title={Centimeter-Level Aerial Assembly Achieved With Manipulating Condition Inference and Compliance}, 
  year={2022},
  volume={27},
  number={3},
  pages={1660-1671},
  keywords={Manipulator dynamics;Atmospheric modeling;End effectors;Dynamics;Task analysis;Real-time systems;Attitude control;Aerial assembly;decoupled deployment;environment compliant;flying manipulator},
  doi={10.1109/TMECH.2021.3085890}}

@article{liu2021ddpg,
  title={{DDPG}-based adaptive robust tracking control for aerial manipulators with decoupling approach},
  author={Liu, Yen-Chen and Huang, Chi-Yu},
  journal={IEEE Transactions on Cybernetics},
  volume={52},
  number={8},
  pages={8258--8271},
  year={2021},
  publisher={IEEE}
}

@inproceedings{zhang2018grasp,
  title={Grasp a moving target from the air: System \& control of an aerial manipulator},
  author={Zhang, Guangyu and He, Yuqing and Dai, Bo and Gu, Feng and Yang, Liying and Han, Jianda and Liu, Guangjun and Qi, Juntong},
  booktitle={2018 IEEE International Conference on Robotics and Automation (ICRA)},
  pages={1681--1687},
  year={2018},
  organization={IEEE}
}

@article{lei2023_steer_mapping_ppc,
  title={Singularity-avoidance prescribed performance control for spacecraft attitude tracking},
  author={Lei, Jiakun and Meng, Tao and Wang, Weijia and Li, Heng and Jin, Zhonghe},
  journal={IEEE Transactions on Aerospace and Electronic Systems},
  volume={59},
  number={5},
  pages={5405--5421},
  year={2023},
  publisher={IEEE}
}

@inproceedings{gomez2018methods_px4,
  title={Methods for autonomous wristband placement with a search-and-rescue aerial manipulator},
  author={G{\'o}mez-de-Gabriel, Jes{\'u}s M and Gandarias, Juan M and P{\'e}rez-Maldonado, Francisco J and Garc{\'\i}a-N{\'u}{\~n}cz, Francisco J and Fern{\'a}ndez-Garc{\'\i}a, Emilio J and Garc{\'\i}a-Cerezo, Alfonso J},
  booktitle={2018 IEEE/RSJ International Conference on Intelligent Robots and Systems (IROS)},
  pages={7838--7844},
  year={2018},
  organization={IEEE}
}

@ARTICLE{2017_TIE_NESO,
  author={Li, Jie and Xia, Yuanqing and Qi, Xiaohui and Gao, Zhiqiang},
  journal={IEEE Transactions on Industrial Electronics}, 
  title={On the Necessity, Scheme, and Basis of the Linear–Nonlinear Switching in Active Disturbance Rejection Control}, 
  year={2017},
  volume={64},
  number={2},
  pages={1425-1435},
  keywords={Tuning;Stability analysis;Switches;Observers;Bandwidth;Statistical analysis;Active disturbance rejection control (ADRC);extended state observer;parameter tuning;stability analysis;switching control},
  doi={10.1109/TIE.2016.2611573}}

@article{CAO_practical_ppt,
title = {Practical prescribed time tracking control over infinite time interval involving mismatched uncertainties and non-vanishing disturbances},
journal = {Automatica},
volume = {136},
pages = {110050},
year = {2022},
issn = {0005-1098},
doi = {https://doi.org/10.1016/j.automatica.2021.110050},
url = {https://www.sciencedirect.com/science/article/pii/S0005109821005781},
author = {Ye Cao and Jianfu Cao and Yongduan Song},
keywords = {Prescribed time control, Practical tracking, Uncertain nonlinear systems, Robust adaptive control}
}

@ARTICLE{2025_aerial_grasping_yadav,
  author={Yadav, Rishabh Dev and Jones, Brycen and Gupta, Saksham and Sharma, Amitabh and Sun, Jiefeng and Zhao, Jianguo and Roy, Spandan},
  journal={IEEE/ASME Transactions on Mechatronics}, 
  title={An Integrated Approach to Aerial Grasping: Combining a Bistable Gripper With Adaptive Control}, 
  year={2025},
  volume={},
  number={},
  pages={1-12},
  keywords={Grippers;Grasping;Force;Adaptive control;Uncertainty;Manipulators;Cables;Springs;Shape;Manipulator dynamics;Adaptive control;aerial grasping;aerial manipulator;aerial robots;mechanism design},
  doi={10.1109/TMECH.2025.3586888}}

@INPROCEEDINGS{mellinger2011design_grasping,
  author={Mellinger, Daniel and Lindsey, Quentin and Shomin, Michael and Kumar, Vijay},
  booktitle={2011 IEEE/RSJ International Conference on Intelligent Robots and Systems}, 
  title={Design, modeling, estimation and control for aerial grasping and manipulation}, 
  year={2011},
  volume={},
  number={},
  pages={2668-2673},
  keywords={Grippers;Payloads;Robots;Equations;Vectors;Force;Aerodynamics},
  doi={10.1109/IROS.2011.6094871}}

@ARTICLE{2024_extracting_object,
  author={Byun, Jeonghyun and Jang, Inkyu and Lee, Dongjae and Kim, H. Jin},
  journal={IEEE Transactions on Automation Science and Engineering}, 
  title={A Hybrid Controller Enhancing Transient Performance for an Aerial Manipulator Extracting a Wedged Object}, 
  year={2024},
  volume={21},
  number={3},
  pages={3264-3273},
  keywords={Manipulator dynamics;Manipulators;Task analysis;Vehicle dynamics;Transient analysis;Force;Autonomous aerial vehicles;Hybrid control;transient performance analysis;disturbance observer (DOB);aerial manipulator},
  doi={10.1109/TASE.2023.3277508}}

@INPROCEEDINGS{2014_couple_control,
  author={Yang, Hyunsoo and Lee, Dongjun},
  booktitle={2014 IEEE International Conference on Robotics and Automation (ICRA)}, 
  title={Dynamics and control of quadrotor with robotic manipulator}, 
  year={2014},
  volume={},
  number={},
  pages={5544-5549},
  keywords={Manipulator dynamics;Vehicle dynamics;Gravity;Aerodynamics},
  doi={10.1109/ICRA.2014.6907674}}

@INPROCEEDINGS{2015_slide_mode_control,
  author={Hyeonbeom Lee and Hyoin Kim and Kim, H. Jin},
  booktitle={2015 IEEE/RSJ International Conference on Intelligent Robots and Systems (IROS)}, 
  title={Path planning and control of multiple aerial manipulators for a cooperative transportation}, 
  year={2015},
  volume={},
  number={},
  pages={2386-2391},
  keywords={Manipulator dynamics;End effectors;Transportation;Grasping;Planning},
  doi={10.1109/IROS.2015.7353700}}

@article{kong2023experimental_ppc,
  title={Experimental validation of a robust prescribed performance nonlinear controller for an unmanned aerial vehicle with unknown mass},
  author={Kong, Linghuan and Reis, Joel and He, Wei and Silvestre, Carlos},
  journal={IEEE/ASME Transactions on Mechatronics},
  volume={29},
  number={1},
  pages={301--312},
  year={2023},
  publisher={IEEE}
}

@book{zhao2025RLBook,
  title={Mathematical Foundations of Reinforcement Learning},
  author={S. Zhao},
  year={2025},
  publisher={Springer Nature Press}
}

@article{cao2025proximal,
  title={Proximal Cooperative Aerial Manipulation with Vertical Stack Drones},
  author={Cao, Huazi and Shen, Jiahao and Zhang, Yin and Fu, Zheng and Liu, Cunjia and Sun, Sihao and Zhao, Shiyu},
  journal={Nature},
  volume={646},
  number={8085},
  pages={576--583},
  year={2025},
  publisher={Nature Publishing Group UK London}
}

@ARTICLE{atassi2001separation,
  author={Atassi, A.N. and Khalil, H.K.},
  journal={IEEE Transactions on Automatic Control}, 
  title={A separation principle for the control of a class of nonlinear systems}, 
  year={2001},
  volume={46},
  number={5},
  pages={742-746},
  keywords={Nonlinear control systems;Control systems;Nonlinear systems;State feedback;Adaptive control;Error correction;Servomechanisms;Output feedback;Convergence;Observers},
  doi={10.1109/9.920793}}

@article{yong2020flexible,
  title={Flexible performance-based robust control for a class of nonlinear systems with input saturation},
  author={Yong, Kenan and Chen, Mou and Shi, Yang and Wu, Qingxian},
  journal={Automatica},
  volume={122},
  pages={109268},
  year={2020},
  publisher={Elsevier}
}
\bibliographystyle{ieeetr}

\begin{IEEEbiography}[{\includegraphics[width=1in,height=1.25in,clip,keepaspectratio]{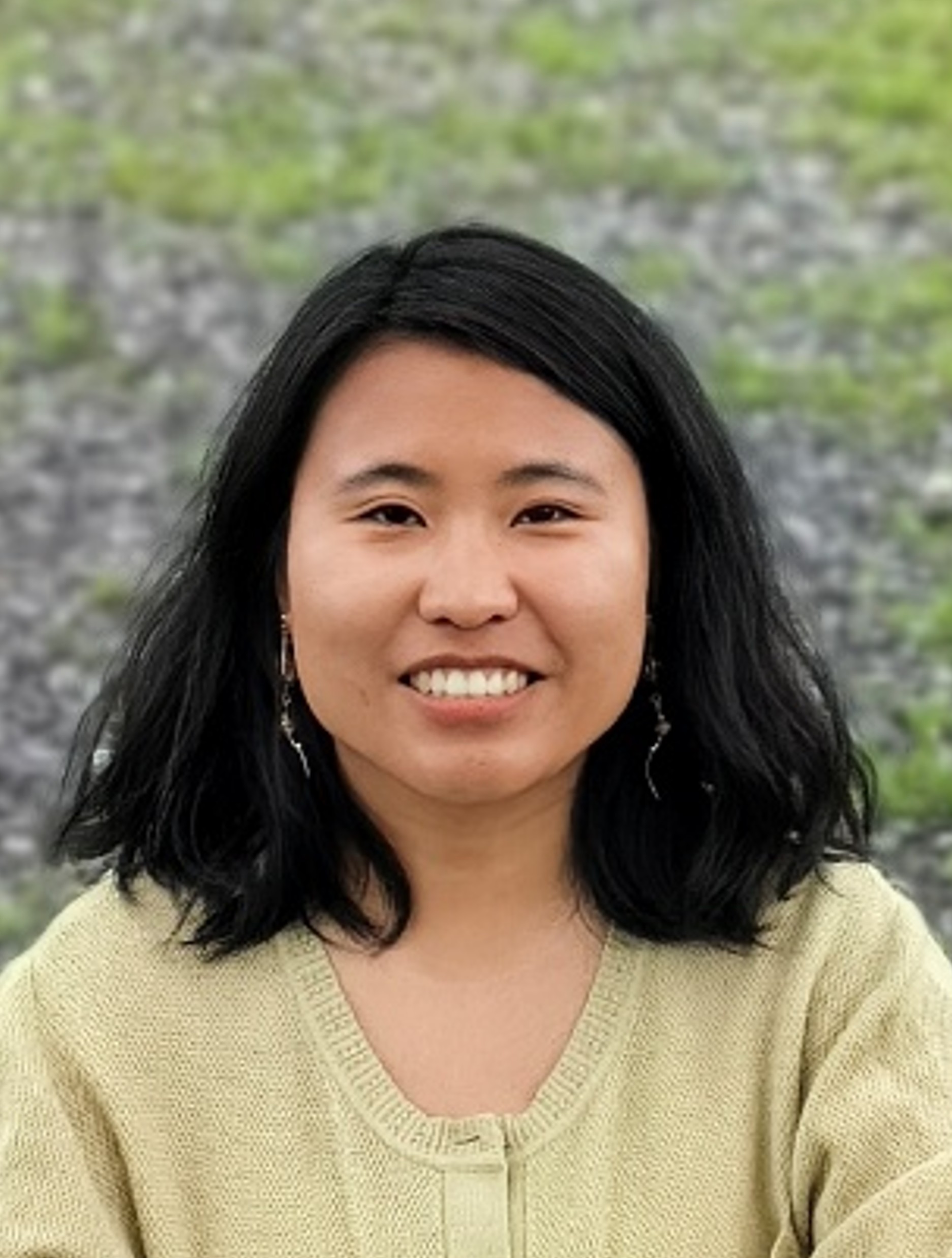}}]
	{Mengyu~Ji} received the B.Eng. degree in Aerospace Engineering from Harbin Engineering University, Harbin, China, in 2017, and the M.Eng. degree from the University of Chinese Academy of Sciences, Beijing, China, in 2020.

    She is currently pursuing the Ph.D. degree with the WINDY Lab, Westlake University, Hangzhou, China. Her research interests include motion control of aerial robotic systems.
\end{IEEEbiography}

\begin{IEEEbiography}[{\includegraphics[width=1in,height=1.25in,clip,keepaspectratio]{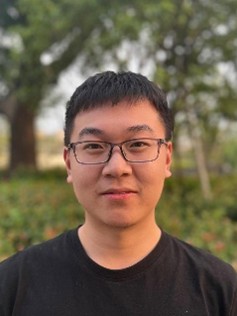}}]
	{Shiliang~Guo}  received the B.Eng. degree in electrical engineering from Tianjin University of Technology and Education, Tianjin, China. 
    
     He was a corecipient of the National Gold Prize in the China University Engineering Practice and Innovation Competition, China, in 2021. He is currently an Embedded Systems Engineer with the WINDY Lab, Westlake University, Hangzhou, China. His research focuses on hardware, software, and controller design of aerial vehicle systems.
\end{IEEEbiography}

\begin{IEEEbiography}[{\includegraphics[width=1in,height=1.25in,clip,keepaspectratio]{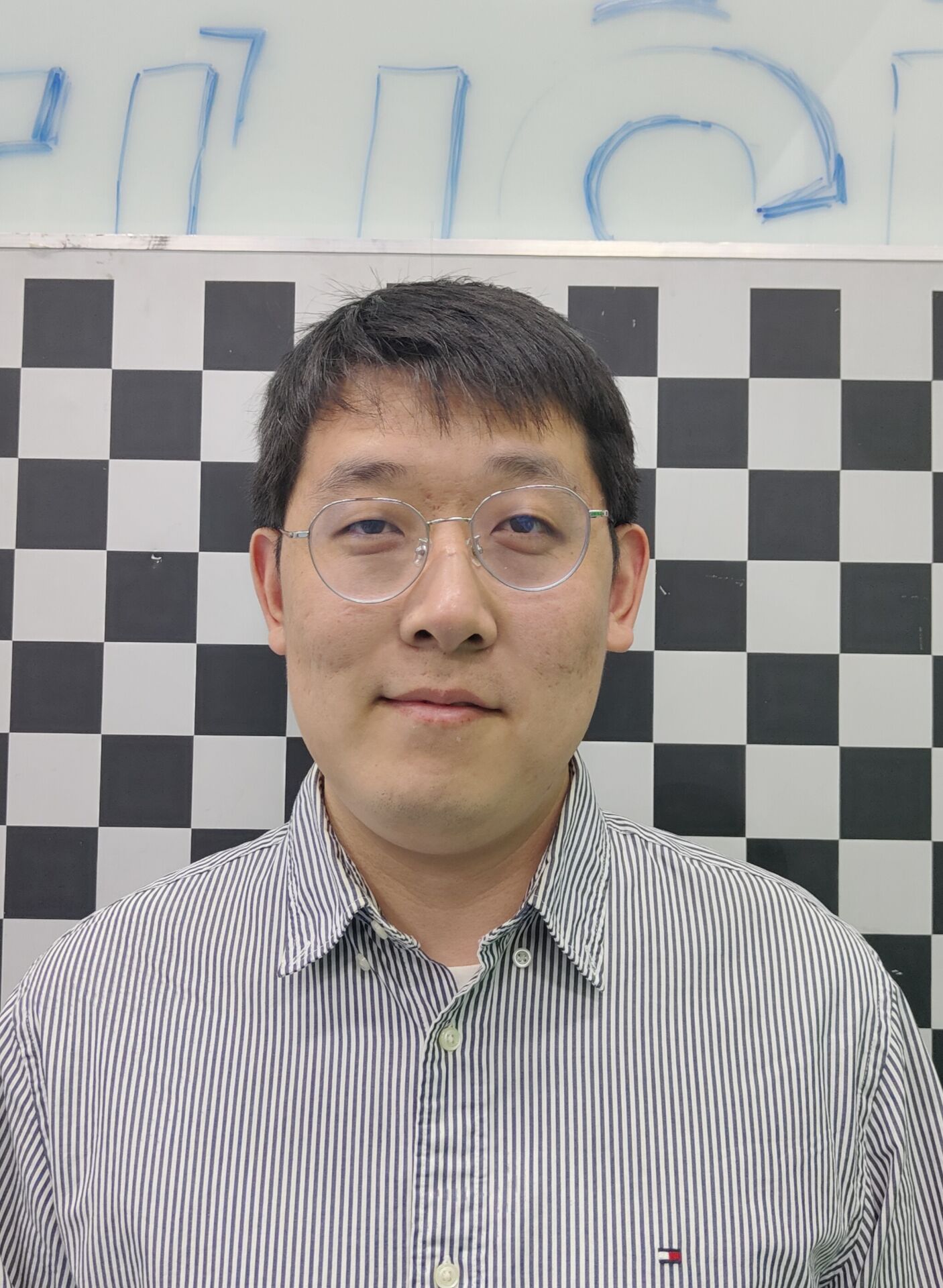}}]
	{Zhengzhen~Li} received the B.Eng. and M.Eng. degrees in biomedical engineering from Shandong University, Jinan, China, in 2019 and 2022, respectively.
    
    He is currently pursuing the Ph.D. degree with the WINDY Lab, Westlake University, Hangzhou, China. His research interests include motion control and reinforcement learning control of aerial robotic systems.
\end{IEEEbiography}

\begin{IEEEbiography}[{\includegraphics[width=1in,height=1.25in,clip,keepaspectratio]{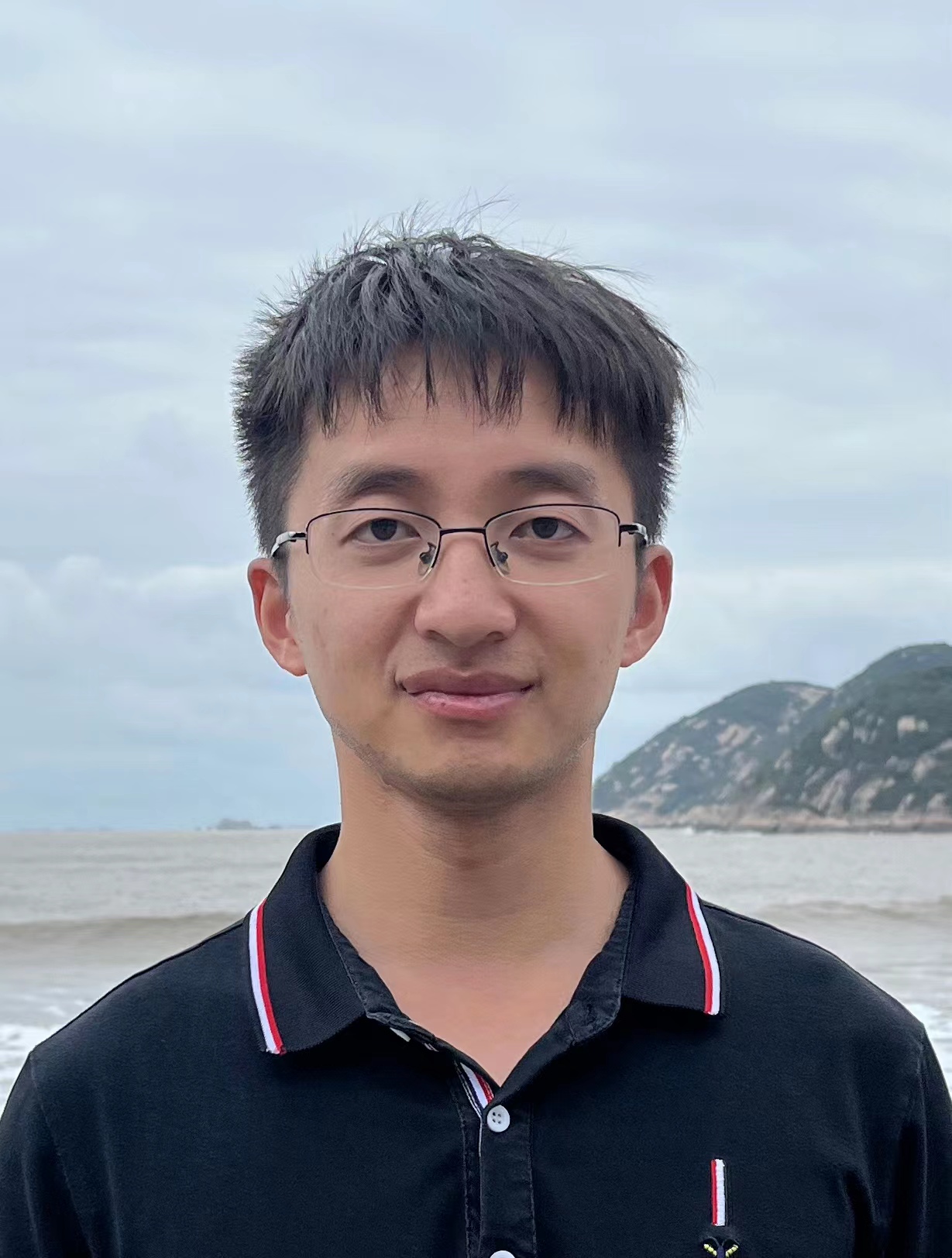}}]
	{Jiahao~Shen}  received the B.Eng. degree from China Jiliang University, Hangzhou, China, in 2019, and the M.Eng. degree from the
	Nanjing University of Aeronautics and Astronautics, Nanjing, China, in 2022, all in electrical engineering.
    
    He is currently a Flight Control Engineer with the WINDY Lab, Westlake University, Hangzhou, China. His research focuses on hardware, software, and controller design of unmanned aerial vehicle systems.
\end{IEEEbiography}

\begin{IEEEbiography}[{\includegraphics[width=1in,height=1.25in,clip,keepaspectratio]{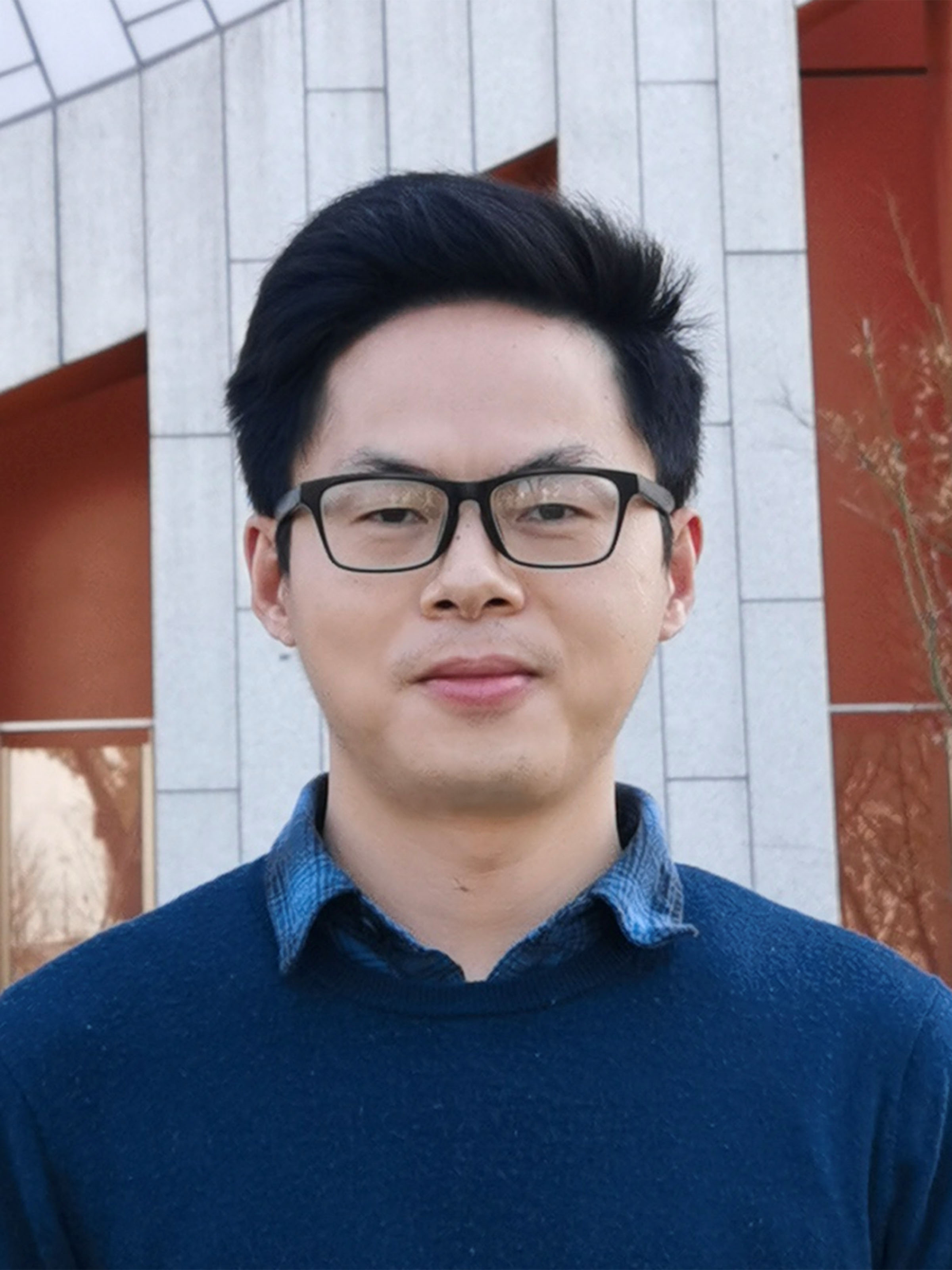}}]
	{Huazi~Cao} received the B.Eng. degree in aerospace engineering from Harbin Engineering University, Harbin, China, in 2013, and the Ph.D. degree in aerospace engineering from the Beijing University of Aeronautics and Astronautics, Beijing, China, in 2018.
	%From 2019 to 2024, he was a Research Fellow at the School of Engineering, Westlake University, Hangzhou, China.
    
	He is currently an Associate Research Professor at the Westlake Institute for Optoelectronics, Westlake University, Hangzhou, China. His research interests include control and planning for aerial robots.
\end{IEEEbiography}

\begin{IEEEbiography}[{\includegraphics[width=1in,height=1.25in,clip,keepaspectratio]{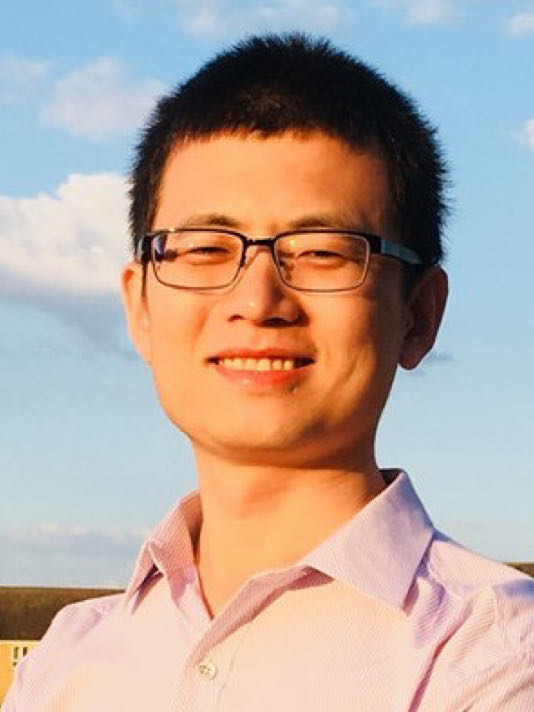}}]
	{Shiyu~Zhao} (Senior Member, IEEE) received the B.Eng. and M.Eng. degrees from the Beijing University of Aeronautics and Astronautics, Beijing, China, in 2006 and	2009, respectively, and the Ph.D. degree from the National University of Singapore, Singapore, in 2014, all in electrical engineering.
	
	From 2014 to 2016, he was a Postdoctoral Researcher with the Technion-Israel Institute of Technology, Haifa, Israel, and the University of California at Riverside, Riverside, CA, USA. From 2016 to 2018, he was a Lecturer with the Department of Automatic Control and Systems Engineering, University of Sheffield, Sheffield, U.K. He is currently an Associate Professor with the School of Engineering, Westlake University, Hangzhou, China. His research focuses on theories and applications of robotic systems.
\end{IEEEbiography}

\end{document}